\newcommand{\KL}{\operatorname{KL}}
\newcommand{\sign}{\operatorname{sign}}
\newcommand{\er}{\operatorname{er}}
\newcommand{\eer}{\widehat{\er}}
\newcommand{\Erf}{\operatorname{erf}}
\newcommand{\Id}{\operatorname{\textit{Id}}}
\newcommand{\pushcode}[1][1]{\hskip\dimexpr#1\algorithmicindent\relax}
\newtheorem{theorem}{Theorem}
\ifcvprfinal\pagestyle{empty}\fi
\begin{document}

\title{Curriculum Learning of Multiple Tasks}

\author{Anastasia Pentina\\
\and
Viktoriia Sharmanska\\
IST Austria, Klosterneuburg, Austria \\
{\tt\small $\{$apentina, vsharman, chl$\}$@ist.ac.at}
\and
Christoph H. Lampert
}

\maketitle
\begin{abstract}
Sharing information between multiple tasks enables algorithms to achieve good generalization performance even from small amounts of training data. 
However, in a realistic scenario of multi-task learning not all tasks are equally related to each other, 
hence it could be advantageous to transfer information only between the most related tasks.

In this work we propose an approach that processes multiple tasks in a sequence with sharing between subsequent tasks instead of solving all tasks jointly. 
Subsequently, we address the question of curriculum learning of tasks, i.e.  finding the best order of tasks to be learned. 
Our approach is based on a generalization bound criterion for choosing the task order that optimizes the average expected classification performance over all tasks.
Our experimental results show that learning multiple related tasks sequentially can be more effective than learning them jointly, the order in which tasks are being solved affects the overall performance, and that our model is able to automatically discover the favourable order of tasks.
\end{abstract}
\section{Introduction}

Multi-task learning~\cite{caruana1997multitask} studies the problem of solving several prediction tasks.
While traditional machine learning algorithms can be applied to solve each task independently, they usually need significant amounts of labelled data to achieve generalization of reasonable quality.
However, in many cases it is expensive and time consuming to annotate large amounts of data, especially in computer vision applications such as object categorization.
%
%
An alternative approach is to share information between several related learning tasks and this has been shown experimentally to allow better generalization from fewer training points per task~\cite{survey}.
%

%
In this work we focus on the parameter transfer approach to multi-task learning that rests on the idea that models corresponding to related tasks are similar to each other in terms of their parameter representations.
We concentrate on the case of linear predictors and assume that similarity between the models is measured by the Euclidean distance between the corresponding weight vectors~\cite{Yang}.
%
%
%
In a multi-task setting this idea was introduced by Evgeniou and Pontil in~\cite{Evgeniou}.
There the authors propose an SVM-based algorithm that enforces the weight vectors corresponding to different tasks to lie close to some common prototype, and they show its effectiveness on several datasets.
%
However, this algorithm treats all the tasks symmetrically, which might not be optimal in more realistic scenarios. 
There might be some outlier tasks or groups of tasks such that there is no similarity between the tasks from different groups. 
Hence, more flexible models are needed that are able to exploit the structure underlying tasks relations and avoid negative consequences of transferring information between unrelated tasks.
\begin{figure}[t]\centering
\includegraphics[width=\columnwidth]{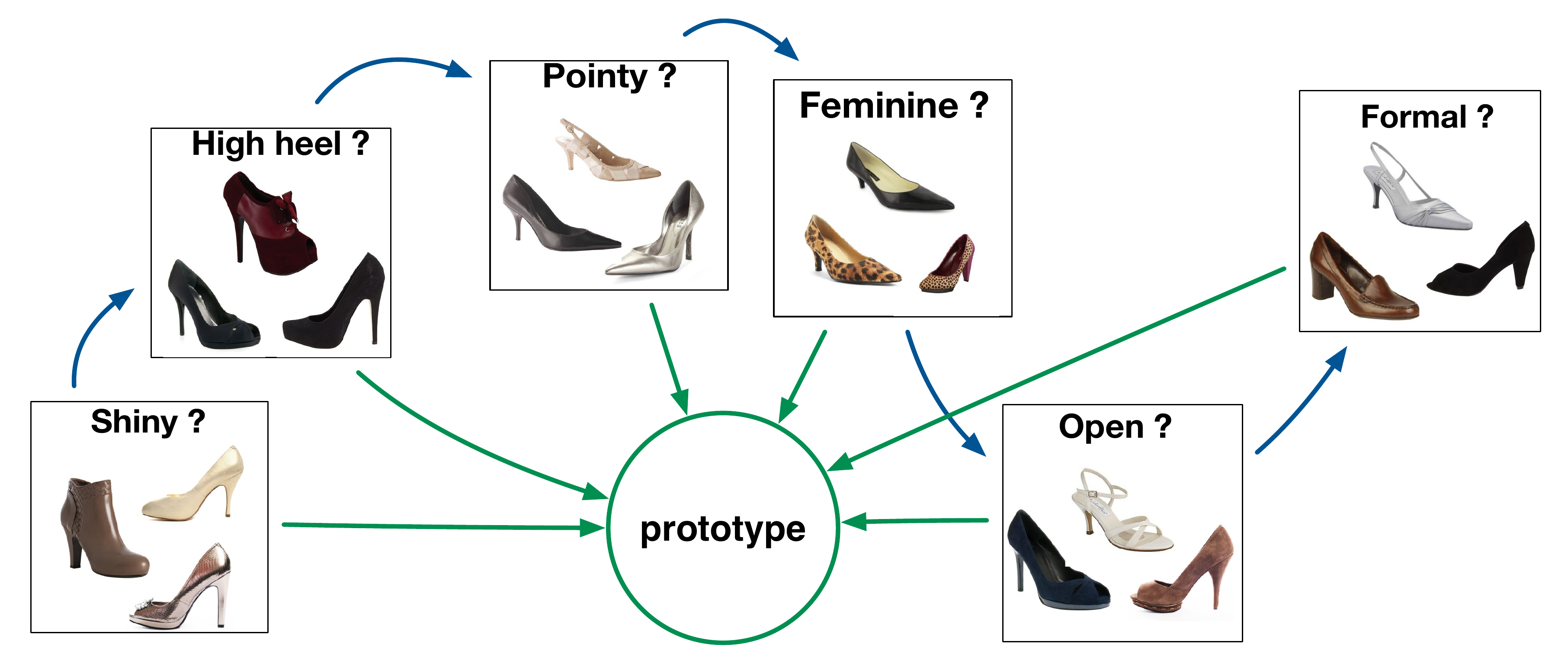}
\caption{Schematic illustration of the proposed multi-task learning approach.
If each task is related to some other task but not equally much to all others, 
learning tasks in a sequence (blue arrows) can be beneficial to classical multi-task 
learning based on sharing information from a single prototype (green arrows). }

\label{fig:motivation}
\end{figure}

The idea of regularizing by Euclidean distance between the weight vectors of different tasks is also commonly used in domain adaptation scenario where the learner has access to two or more prediction tasks but is interested in performing well only on one of them. All other tasks serve only as sources of additional information.
This setup has been shown to lead to effective algorithms in various computer vision applications: object detection~\cite{Aytar}, personalized image search~\cite{DBLP:conf/iccv/KovashkaG13a}, hand prosthetics~\cite{DBLP:conf/icra/OrabonaCCFS09} and image categorization~\cite{Tommasi_BMVC_2009,DBLP:conf/cvpr/TommasiOC10}. 
Though the domain adaptation scenario is noticeably different from the multi-task one, as it concentrates on solving only one task instead of 
all of them, the two research areas are closely related in term of their
methodology and therefore can benefit from each other.
In particular, the learning algorithm we propose can be seen as a way to decompose a multi-task problem into a set of domain adaptation problems.

Our approach is motivated by the human educational process.
If we consider students at school, they, similarly to a multi-task learner, are supposed to learn many concepts.
However, they learn them not all simultaneously, but in a sequence.
By processing tasks in a meaningful order, students are able to gradually increase their knowledge and reuse previously accumulated information to learn new concepts more effectively.
Inspired by this example we propose to solve tasks in a sequential manner by transferring information from a previously learned task to the next one instead of solving all of them simultaneously.
This approach makes learning more flexible in terms of variability between the tasks and memory efficient as it does not require processing all training data at the same time.

As for students at school, the order in which tasks are solved may crucially affect the overall performance of the learner.
We study this question by using PAC-Bayesian theory~\cite{McAllester} to prove a generalization bound that depends on the data representation and algorithm used to solve the tasks.
The bound quantifies the effectiveness of the order in which tasks are solved and therefore can be used to find a beneficial order.
Based on the bound we propose a theoretically justified algorithm that automatically chooses a favourable sequence for learning.
Our experimental results on two real-world image datasets show that learning tasks in a sequence can be superior to independent learning as well as to the standard multi-task approach of solving them jointly, and that our algorithm is able to reliably discover an advantageous order.

\section{Related Work}
While our work is based on the idea of transferring information through weight vectors,  other approaches to multi-task learning have been proposed as well.  
A popular idea in the machine learning literature is that parameters of related tasks can be represented as linear combinations of a small number of common latent basis vectors.
Argyriou \emph{et al.} proposed a method to learn such representations using sparsity regularization in~\cite{Argyriou}. 
This method was later extended to allow partial overlap between groups of tasks in~\cite{Kumar}.
It was also adapted to the lifelong setting in~\cite{Ruvolo}, where 
Ruvolo and Eaton proposed a way to sequentially update the model 
as new tasks arrive.
In~\cite{Ruvolo2013Active}, the same authors further extended it to the case when the learner is allowed to choose which task to solve next and they proposed
using different heuristics for making this choice. 
Experimentally, subspace-based methods have shown good performance in situations where many tasks are available and the underlying feature representations are low-dimensional.
When the feature dimensionality gets larger, however, their computational cost grows quickly, and this makes them not applicable for the type of computer vision problems we are interested in.\footnote{\scriptsize{In preliminary experiments we tried to use ELLA~\cite{Ruvolo}, as one of the fastest exiting methods, but found the experiments intractable to do at full size. A simplified setup produced results clearly below that of other baselines.}}
An exception is~\cite{dinesh-cvpr2014}, where Jayaraman \emph{et al.} apply subspace-based method to jointly learn multiple attribute predictors. 
However, even there, dimensionality reduction was required. 

Methods based on the sharing of weight vector have also been 
generalized since their original introduction in~\cite{Evgeniou}, 
in particular to relax the assumption that all tasks have to 
be related.
In~\cite{Evgeniou:2005:LMT:1046920.1088693}, Evgeniou \emph{et al.} 
achieved this by introducing a graph regularization. 
Alternatively, Chen \emph{et al.}~\cite{DBLP:journals/corr/abs-1005-3579}  
proposed to penalize deviations in weight vectors for highly correlated tasks. 
However, these methods require prior knowledge about the amount 
of similarities between tasks. 
In contrast, the algorithm we present in this work does not assume all tasks to be related, yet does not need a priori information regarding their similarities, either. 

The question how to order a sequence of learning steps to achieve best performance has previously been studied mainly in the context of single task learning, where the question is in which order one should process the training examples. 
In~\cite{Ben09} Bengio \emph{et al.} showed experimentally that choosing training examples in an order of gradually increasing difficulty can lead to faster training and higher prediction quality. 
Similarly, Kumar \emph{et al.}~\cite{Kum10} introduced the self-paced learning algorithm, which automatically chooses the order in which training examples are processed for solving a non-convex learning problem.
In the context of learning multiple tasks, the question in which order to learn them was introduced in~\cite{lad2009toward}, where Lad \emph{et al.} proposed an algorithm for optimizing the task order based on pairwise preferences.
However, they considered only the setting in which tasks are performed in a sequence through user interaction and therefore their approach is not applicable in the standard multi-task scenario.
In the setting of multi-label classification, the idea of decomposing a multi-target problem into a sequence of single-target ones was proposed by Read \emph{et al.} in~\cite{Read:2009:CCM:1617459.1617477}. However, there the sharing of information occurs through augmentations of the feature vectors,
not through a regularization term. 

\section{Method}

In the multi-task scenario a learning system observes multiple supervised learning tasks, 
for example, recognizing objects or predicting attributes.
Its goal is to solve all these tasks by sharing information between them.
Formally we assume that the learner observes $n$ tasks, denoted by $t_1,
\dots,t_n$, which share the same input and output spaces, $\mathcal{X}\subset\mathbb{R}^d$ and $\mathcal{Y}=\{-1,+1\}$, respectively.  
Each task $t_i$ is defined by the corresponding set $S_i=\{(x^i_1,y^i_1),\dots,(x^i_{m_i},y^i_{m_i})\}$ of $m_i$ training points sampled i.i.d. according to some unknown distribution $D_i$ over $\mathcal{X}\times\mathcal{Y}$.  
We also assume that for solving each task the learner uses a linear predictor $f(x)=\sign\langle w,x\rangle$, where $w\in\mathbb{R}^d$ is a weight vector, and we measure the classification performance by the $0/1$ loss, $l(y_1,y_2)=\llbracket y_1\neq y_2\rrbracket$.
The goal of the learner is to find $n$ weight vectors $w_1,\dots,w_n$ such that the average expected error on tasks $t_1,\dots,t_n$ (given that the predictions are made by the corresponding linear predictors) is minimized:
\begin{equation}
\er(w_1,\dots,w_n) = \frac{1}{n}\sum_{i=1}^n\underset{(x,y)\sim D_i}{\mathbf{E}}\llbracket y\neq \sign\langle w_i,x\rangle\rrbracket.
\label{exp_error}
\end{equation}

\subsection{Learning in a fixed order}
We propose to decompose a multi-task problem of solving $n$ tasks into $n$ domain adaptation problems.
Specifically, we assume that the tasks $t_1,\dots,t_n$ are processed sequentially in some order $\pi\in\mathcal{S}_n$, where $\mathcal{S}_n$ is a symmetric group of all permutation over $n$ elements, and information is transferred between subsequent tasks: from $t_{\pi(i-1)}$ to $t_{\pi(i)}$ for all $i=2,\dots,n$.
In this procedure the previously solved task serves as a source of additional information 
for the next task and any of the existing domain adaptation methods can be used. 
In this paper we use an Adaptive SVM~\cite{Aytar} to train classifiers for every task due to its proved effectiveness in computer vision applications.
For a given weight vector $\tilde{w}$ and training data for a task, the Adaptive SVM performs the following optimization:
\begin{gather}
\label{ASVM}
\min_w\;\;\|w - \tilde{w}\|^2 
+ \frac{C}{m}\sum_{j=1}^{m}\xi_j\\
\notag
\text{sb.t.} \;\; y_j\langle w,x_j\rangle\geq 1-\xi_j, \;\; \xi_j\geq 0 \;\; \text{for all}\;\; 1\leq j\leq m.
\end{gather} 
Specifically, to learn a predictor for the task $t_{\pi(i)}$ we solve~\eqref{ASVM} using the weight vector obtained for the previous task, $w_{\pi(i-1)}$, as $\tilde{w}$.
For the very first task, $t_{\pi(0)}$, we use the standard linear SVM, i.e. $\tilde{w}=\textbf{0}$.
To simplify the notation, we define $\pi(0)$ to be $0$ and $w_0$ to be the zero vector. 

Note that this approach does not rely on the assumption that all the tasks $t_1,\dots,t_n$ are equally related.
However its performance will depend on the order $\pi$ as it needs subsequent tasks to be related.
In the next section we study this question using statistical learning theory and introduce an algorithm for automatically defining a beneficial data-dependent order.

\subsection{Learning a data-dependent order}
\label{theory}

Here we examine the role of the order $\pi$ in terms of the average expected error~\eqref{exp_error} of the resulting solutions.
However, we do not limit our theoretical analysis to the case of using Adaptive SVMs as described earlier.
Specifically, we only assume that the learning algorithm used for solving each individual task $t_{\pi(i)}$ is the same for all tasks and deterministic.
This algorithm, $\mathcal{A}(w_{\pi(i-1)}, S_{\pi(i)})$, returns $w_{\pi(i)}$ based on the solution $w_{\pi(i-1)}$ obtained for a previously solved task and training data $S_{\pi(i)}$. 
The following theorem provides an upper-bound on the average expected error~\eqref{exp_error} of the obtained predictors (the proof can be found in the Appendix~\ref{ProofThm1}).

\begin{theorem}
For any deterministic learning algorithm $\mathcal{A}$ and any $\delta>0$, the following inequality holds with probability at least $1-\delta$ (over sampling the training sets $S_1,\dots,S_n$) uniformly for any order $\pi\in\mathcal{S}_n$:
\begin{align}
\notag
\frac{1}{2n}&\sum_{i=1}^n\underset{(x,y)\sim D_i}{\mathbf{E}}\llbracket y\neq\sign\langle w_i,x\rangle\rrbracket\leq\\
\notag
\frac{1}{n}&\sum_{i=1}^n\Bigg[\frac{1}{m_{\pi(i)}}\sum_{j=1}^{m_{\pi(i)}}\bar{\Phi}\left(\frac{y^{\pi(i)}_j\langle w_{\pi(i)},x^{\pi(i)}_j\rangle}{||x^{\pi(i)}_j||}\right)
 +\\
 &\frac{||w_{\pi(i)}\!-\!w_{\pi(i-1)}||^2}{2\sqrt{\bar{m}}}\Bigg]
 +\frac{1}{8\sqrt{\bar{m}}} - \frac{\log\delta}{n\sqrt{\bar{m}}}+\frac{\log n}{\sqrt{\bar{m}}},
\label{objective_function}
\end{align}
where $\bar{m}$ is the harmonic mean of the sample sizes $m_1,\dots,m_n$, $\bar{\Phi}(z)=\frac{1}{2}\left(1-\Erf\left(\frac{z}{\sqrt{2}}\right)\right)$, 
$\Erf(z)$ is the Gauss error function~\cite{Germain:2009, langford2002}, $\pi(0)=0$, $w_0=\mathbf{0}$ and $w_{\pi(i)}=\mathcal{A}(w_{\pi(i-1)},S_{\pi(i)})$.
\label{bound}
\end{theorem}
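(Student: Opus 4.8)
The plan is to derive the bound by combining a PAC-Bayesian analysis of each individual task with a union bound over the $n!$ possible orders, and then chaining the per-task bounds together so that the transfer terms telescope into the sum of squared weight differences $\|w_{\pi(i)}-w_{\pi(i-1)}\|^2$. First I would recall the standard PAC-Bayesian margin bound for linear classifiers with Gaussian posteriors and priors (as in Langford--Shawe-Taylor / Germain \emph{et al.}): for a fixed prior $\mathcal{N}(\tilde w, I)$ and posterior $\mathcal{N}(w, I)$, the Gibbs classifier's expected $0/1$ error is controlled by the empirical $\bar\Phi$-loss $\frac1m\sum_j\bar\Phi\!\left(\frac{y_j\langle w,x_j\rangle}{\|x_j\|}\right)$ plus a complexity term proportional to $\frac{\KL}{m} = \frac{\|w-\tilde w\|^2/2 + \text{const}}{m}$ — here the $\bar\Phi$ function arises precisely because randomizing $w$ by a unit Gaussian turns the hard threshold into the Gaussian-smoothed loss. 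The factor $\tfrac12$ on the left-hand side and the $\sqrt{\bar m}$ in the denominators (rather than $m$) suggest using the ``linearized'' form of the PAC-Bayes bound, $\er \le 2\,\eer + \tfrac{\KL + \log(1/\delta')}{2\sqrt{m}} + \tfrac{1}{8\sqrt m}$ type inequality, obtained by optimizing the free parameter in the bound; I would carry out that optimization (or quote it) to get the clean $2\sqrt{\bar m}$ denominator.

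The key structural point is the \emph{sequential} dependence: the prior used when learning $t_{\pi(i)}$ is $w_{\pi(i-1)}$, which is itself data-dependent. To make this legitimate one must ensure the prior for task $\pi(i)$ depends only on the samples $S_{\pi(1)},\dots,S_{\pi(i-1)}$ and not on $S_{\pi(i)}$ itself — which holds because $\mathcal{A}$ is deterministic and processes tasks in order, so conditioned on the earlier samples the prior $w_{\pi(i-1)}$ is fixed while $S_{\pi(i)}$ is still an i.i.d.\ draw from $D_{\pi(i)}$. I would apply the single-task PAC-Bayes bound to each task with confidence parameter $\delta' = \delta/(n\cdot n!)$, take a union bound over the $n$ tasks \emph{and} over all $\pi\in\mathcal{S}_n$ (this is what buys the ``uniformly for any order'' clause), then average the $n$ inequalities. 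The $\log(n!\,n/\delta)$ term splits via $\log(n!) \le n\log n$ into the $\frac{\log n}{\sqrt{\bar m}}$ and $-\frac{\log\delta}{n\sqrt{\bar m}}$ contributions after dividing by $n$, and replacing each $\frac{1}{m_{\pi(i)}}$ appearing in the complexity/confidence terms by its average — i.e.\ $\frac1n\sum_i \frac{1}{m_{\pi(i)}} = \frac1n\sum_i\frac1{m_i} = \frac{1}{\bar m}$ — is exactly where the harmonic mean $\bar m$ enters. For the margin-loss terms I keep $m_{\pi(i)}$ as is, matching the statement.

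The remaining step is bookkeeping: the per-task complexity terms are $\frac{\|w_{\pi(i)}-w_{\pi(i-1)}\|^2}{2\sqrt{\bar m}}$ after absorbing the $1/m_{\pi(i)}$ into $1/\sqrt{\bar m}$ and using the convention $w_{\pi(0)} = w_0 = \mathbf{0}$ (the first task uses the zero prior, i.e.\ plain SVM), which reproduces the sum in the bound directly — no telescoping cancellation is actually needed, the terms simply accumulate. I would also need to relate the Gibbs (randomized) classifier error back to the deterministic classifier $\sign\langle w_i,x\rangle$; this is the standard factor-of-2 argument (the deterministic classifier errs only if at least half the Gaussian mass errs), and it is the source of the $\tfrac{1}{2n}$ prefactor on the left — so I would not try to remove it. The main obstacle I anticipate is making the data-dependent-prior argument fully rigorous: one must be careful that the union bound over orders interacts correctly with the fact that, for a \emph{given} realized order, the intermediate $w$'s are measurable functions of the earlier data, so the cleanest route is to prove the bound for each \emph{fixed} $\pi$ with the sequential prior construction valid by the filtration argument above, and only then union-bound over $\pi$; conflating these two quantifiers is the easy place to go wrong.
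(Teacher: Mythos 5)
Your ingredients are all the right ones --- Gaussian priors and posteriors so that the Gibbs loss becomes $\bar\Phi$, $\KL(\mathcal{N}(w,\Id)\,\|\,\mathcal{N}(\tilde w,\Id))=\|w-\tilde w\|^2/2$, the filtration argument that legitimizes the data-dependent prior $w_{\pi(i-1)}$ for task $t_{\pi(i)}$, the union bound over the $n!$ orders with $\log(n!)\le n\log n$, and the factor-of-two Gibbs-to-deterministic conversion --- but the architecture of your argument ($n$ separate single-task PAC-Bayes bounds, union-bounded and then averaged) cannot produce the constants in the statement, and the bookkeeping step where you claim it does is where the proof breaks. If each task pays its own confidence penalty $\log(1/\delta')$ with $\delta'=\delta/(n\cdot n!)$, scaled by $1/\lambda_i$ for that task's free parameter, then averaging the $n$ inequalities gives a total confidence cost of $\frac1n\sum_i\frac{\log(n\cdot n!/\delta)}{\lambda_i}$; to make the complexity terms come out as $\frac{\|w_{\pi(i)}-w_{\pi(i-1)}\|^2}{2\sqrt{\bar m}}$ you are forced to take $\lambda_i\approx\sqrt{\bar m}$, and then the confidence cost is $\frac{\log(n\cdot n!/\delta)}{\sqrt{\bar m}}\approx\frac{n\log n-\log\delta}{\sqrt{\bar m}}$ --- a factor of $n$ larger than the theorem's $\frac{\log n}{\sqrt{\bar m}}-\frac{\log\delta}{n\sqrt{\bar m}}$. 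Your claim that the $\log(n!\,n/\delta)$ term gets ``divided by $n$'' upon averaging is the error: averaging $n$ copies of the same penalty does not divide it by $n$. Relatedly, your proposal to replace each $1/m_{\pi(i)}$ in the complexity terms ``by its average'' $1/\bar m$ is not a valid upper-bounding step for the KL terms: any task with $m_{\pi(i)}<\bar m$ (and one always exists unless all sample sizes are equal) has $1/\sqrt{m_{\pi(i)}}>1/\sqrt{\bar m}$, so the replacement decreases that term.

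The paper avoids both problems by treating the $n$ tasks \emph{jointly}: a single Donsker--Varadhan change of measure applied to the product posterior $Q_1\times\cdots\times Q_n$ against the product prior $Q_0\times\cdots\times Q_{n-1}$, a single Markov inequality for the joint moment generating function $\mathbf{E}\exp\bigl(\frac{\lambda}{n}\sum_i(\er_i-\eer_i)\bigr)$, and a single free parameter $\lambda=n\sqrt{\bar m}$. The technical heart is showing that this joint moment generating function factorizes task by task (precisely because $Q_i$ depends on $S_1,\dots,S_i$ but not on $S_{i+1},\dots,S_n$, so the nested expectations interleave correctly); Hoeffding's lemma applied to each factor then gives $\exp(\lambda^2/(8n\bar m))$ --- this is where the harmonic mean enters, with no post-hoc replacement needed --- and the single $\log(1/\delta)$ is amortized as $\frac{\log(n!/\delta)}{n\sqrt{\bar m}}$. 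A per-task proof along your lines is salvageable but proves a strictly weaker bound with $\frac{\log(n\cdot n!/\delta)}{\sqrt{\bar m}}$ in place of $\frac{\log(n!/\delta)}{n\sqrt{\bar m}}$; to get the stated theorem you need the joint treatment. (A minor additional point: the ``linearized'' form $\er\le 2\,\eer+\cdots$ you invoke would leave a spurious factor of $2$ on the empirical $\bar\Phi$ term after determinization; the paper uses the first-order form $\er\le\eer+\cdots$, and the $\tfrac12$ on the left comes solely from the Gibbs-to-majority-vote step, as you correctly note at the end.)
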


The left hand side of the inequality~\eqref{objective_function} is one half of the average expected error on tasks $t_1,\dots,t_n$.
This is the quantity of interest that the learner would like to minimize.
However, since the underlying data distributions $D_1,\dots,D_n$ are unknown, it is not computable.
In contrast, its upper bound given by the right hand side of~\eqref{objective_function} contains only computable quantities.
It is an average of $n$ terms (up to constants which do not depend on $\pi$), where each term corresponds to one task.
If we consider the term corresponding to the task $t_{\pi(i)}$, its first part is an analogue of the training error.
Each term  $\bar{\Phi}\left(y^{\pi(i)}_j\langle w_{\pi(i)},x^{\pi(i)}_j\rangle||x^{\pi(i)}_j||^{-1}\right)$ has a value between $0$ and $1$ and is a monotonically decreasing function of the distance between the training point $x^{\pi(i)}_k$ and the hyperplane defined by $w_{\pi(i)}$.
Specifically, it is close to $0$ when $x^{\pi(i)}_j$ is correctly classified and has large distance from the separating hyperplane, it is close to $1$ when the point is in the wrong halfspace far from the hyperplane and is $0.5$ when $x^{\pi(i)}_j$ lies on the hyperplane.
%
%
%
Therefore it captures the confidence of the predictions on the training set.
The second part of the term corresponding to the task $t_{\pi(i)}$ is a complexity term.
It measures the similarity between subsequent tasks $t_{\pi(i-1)}$ and $t_{\pi(i)}$ by the $L_2$-distance between the obtained weight vectors.
As a result the value of the right-hand side of~\eqref{objective_function} depends on $\pi$ and captures the influence that the task $t_{\pi(i)}$ may have on the subsequent tasks $t_{\pi(i+1)},\dots,t_{\pi(n)}$.
Therefore it can be seen as a quality measure of order $\pi$: a low value of the right hand side of~\eqref{objective_function} ensures a low expected error~\eqref{exp_error}.
It leads to an algorithm for obtaining an order $\pi$ that is adjusted to the tasks $t_1,\dots,t_n$  by minimizing the right hand side of \eqref{objective_function} based on the data $S_1,\dots,S_n$. 
Because \eqref{objective_function} holds uniformly in $\pi$, its guarantees also hold for the learned order\footnote{\scriptsize{Note that, in contrast, algorithm $\mathcal{A}$ is assumed to be fixed in advance.
Therefore the order $\pi$ is the only parameter that can be adjusted by minimizing~\eqref{objective_function} with preservation of the performance guarantees given by Theorem~\ref{bound}.}}.
%


Minimizing the right hand side of~\eqref{objective_function} is an expensive combinatorial problem, because it requires searching over all possible permutations $\pi\in\mathcal{S}_n$.
We propose an incremental procedure for performing this search approximately.
We successively determine $\pi(i)$ by minimizing the corresponding term of the upper bound~\eqref{objective_function} with respect to yet unsolved tasks.
Specifically, at the $i$-th step, when $\pi(1),\dots,\pi(i-1)$ are already defined, we search for a task $t_k$ that minimizes the following objective function and is not included in the order $\pi$ yet:
\begin{equation}
\frac{1}{m_k}\sum_{j=1}^{m_k}\bar{\Phi}\left(\frac{y^k_j\langle w_k,x^k_j\rangle}{||x^k_j||}\right)+\frac{||w_k-w_{\pi(i-1)}||^2}{2\sqrt{\bar{m}}},
\label{step2}
\end{equation}
where $w_k=\mathcal{A}(w_{\pi(i-1)}, S_k)$. 
We let $\pi(i)$ be the index of the task that minimizes~\eqref{step2}.
Suchwise at every step we choose the task that is easy (has low empirical error) and similar to the previous one (the corresponding weight vectors are close in terms of $L_2$ norm).
Therefore this optimization process well fits humans intuitive concept of starting with the simplest task and proceeding with most similar ones.
The resulting procedure in the case of using Adaptive SVM~\eqref{ASVM} for solving every task is summarized in Algorithm~\ref{alg1} and we refer to it as SeqMT.

\begin{algorithm}[]
\caption{Sequential Learning of Multiple Tasks}\label{alg1}
\begin{algorithmic}[1]
\STATE {\bfseries Input} $S_1,\dots,S_n$ \COMMENT{training sets}
\STATE $\pi(0)\gets 0$, $w_0\gets\boldsymbol{0}$
\STATE $T\gets\{1,2,\dots,n\}$ \COMMENT{indices of yet unused tasks} 
\FOR {$i = 1$ to $n$}
\FORALL {$k\in T$}
\STATE $w_k\gets$ solution of~\eqref{ASVM} using $S_k$, $w_{\pi(i-1)}$
\ENDFOR
\STATE $\pi(i)\gets$ minimizer of~\eqref{step2} w.r.t. $k$
\STATE $w_{\pi(i)}\gets$ $w_k$ where $k={\pi(i)}$
\STATE $T\gets T\setminus\{\pi(i)\}$
\ENDFOR
\STATE {\bfseries Return} $w_{1},\dots,w_{n}$ and $\pi(1),\dots,\pi(n)$ 
\end{algorithmic}
\end{algorithm}

\subsection{Learning with multiple subsequences}

The proposed algorithm, SeqMT, relies on the idea that all tasks can be ordered in a sequence, where each task is related to the previous one.
In practice, this is not always the case, since we can have outlier tasks that are not related to any other tasks, or we can have several groups of tasks, in which case it is beneficial to form subsequences within the groups, but it is disadvantageous to join them into one single sequence. 
%

Therefore, we propose an extension of the SeqMT model, that allows tasks to form subsequences, where the information is transferred only between the tasks within the subsequence.
Our multiple subsequences version, MultiSeqMT, also chooses tasks iteratively, but at any stage it allows the learner to choose whether to continue one of the existing subsequences or to start a new one. 
In order to decide which task to solve next and which subsequence to continue with it, the learner performs a two-stage optimization.
First, for each of the exiting subsequences $s$ (including empty one that corresponds to the no transfer case) the learner finds the task $t_s$ that is the most promising to continue with.
This is done in the same way as how the next task is chosen in the SeqMT algorithm.
Afterwards, the learner compares the values of criterion~\eqref{step2} for every pair $(s,t_s)$ and chooses the subsequence $s^*$ with the minimal value and continues it with the task $t_{s^*}$.      
%
%
Please, refer to the Appendix~\ref{MSubSeq} for exact formulation.
\section{Experiments}
In this section we verify our two main claims: 
1) learning multiple tasks in a sequential manner can be more effective than learning them jointly; 
2) we can find \emph{automatically} a favourable order in terms of average classification accuracy. 
We use two publicly available datasets:  \textit{Animals with Attributes (AwA)}\footnote{\scriptsize{\url{http://attributes.kyb.tuebingen.mpg.de/}}}~\cite{Lam13} and \textit{Shoes}\footnote{\scriptsize{\url{http://tamaraberg.com/attributesDataset/index.html}}}~\cite{Berg2010} augmented with attributes\footnote{\scriptsize{\url{http://vision.cs.utexas.edu/whittlesearch/}}}~\cite{whittlesearch}. 
In the first experiment, we study the case when each task has a certain level of difficulty for learning the object class, which is defined by human annotation in a range from easiest to hardest. 
We show the advantage of a sequential learning model over learning multiple tasks jointly and learning each task independently. 
We also study the automatically determined orders in more detail, comparing them with the orders when learning goes from easiest to hardest tasks in the spirit of human learning. 
In the second experiment, we study the scenario of learning visual attributes 
that characterize shoes across different shoe models. 
In this setting, some tasks are clearly related such as \emph{high heel} and \emph{shiny}, and some tasks are not, such as \emph{high heel} and \emph{sporty}. 
Therefore, we also apply the variant of our algorithm that allows multiple subsequences, 
showing that it better captures the task structure and is therefore the favourable learning strategy. 

\subsection{Learning the order of easy and hard tasks} 
\begin{figure*}[th]
\centering
\includegraphics[scale=0.85]{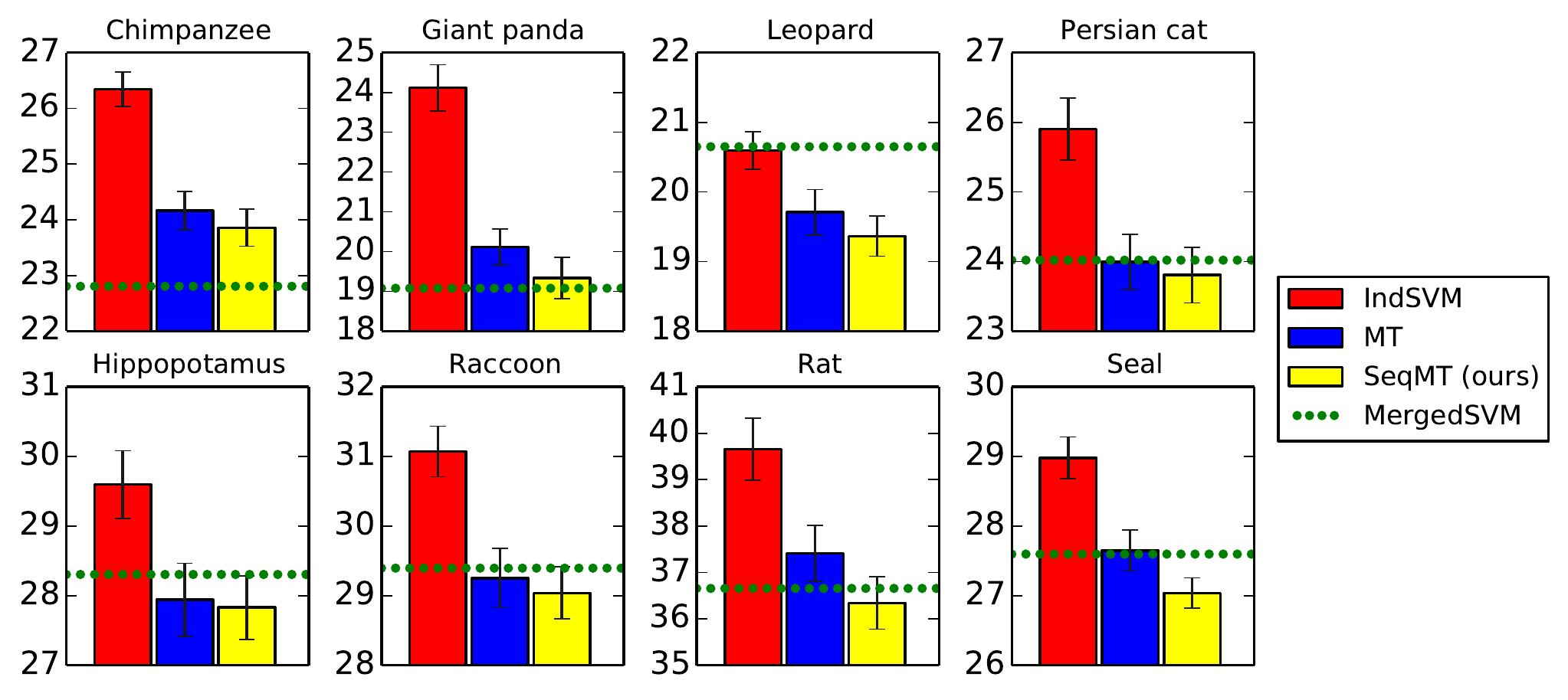}
\caption{Learning the order of easy and hard tasks on AwA dataset: comparison of the proposed SeqMT method with the multi-task (MT) and the single-task (IndSVM) baselines. 
The height of the bar corresponds to the average error rate performance over 5 tasks across 20 repeats (the lower the better). 
As a reference, we also provide the MergedSVM baseline, trained on data that is merged from all tasks. 
For a complete table with all the results, please refer to the Appendix~\ref{Exp}. 
}
\label{fig:easyhard_mt}
\end{figure*}

We focus on eight classes from the \emph{AwA} dataset: \emph{chimpanzee, giant panda, leopard, persian cat, hippopotamus, raccoon, rat, seal}, for which human annotation is available, whether
an object is \emph{easy} or \emph{hard} to recognize in an image~\cite{ShaQuaLam14}. 
For each class the annotation specifies ranking scores of its images from easiest to hardest. 
To create easy-hard tasks, we split the data in each class into five equal parts with respect to their easy-hard ranking and use these parts to create five tasks per class. 
Each part has on average $120$ samples except the class \emph{rat}, for which AwA contains 
few images, so there are only approximately $60$ samples per part.
Each task is a binary one-versus-rest classification of one of the parts against the remaining seven classes. 
For each task we balance $21$ vs $21$ training images and $77$ vs $77$ test images ($35$ vs $35$ in case of class \emph{rat}) with equal amount of samples from each of the classes acting as negative examples. The data between different tasks does not overlap. 
As our feature representation, we use $2000$ dimensional bag-of-words histograms obtained from SURF descriptors \cite{Bay08} provided together with the dataset. 
We $L_2$-normalize the features and augment them with a unit element to act as a bias term. 

{\bf Evaluation metric.} To evaluate the performance of the methods we use the classification error rate. 
We repeat each of the experiments $20$ times with different random data splits and measure the average error rate across the tasks.
%
%
We report mean and standard error of the mean of this value over all repeats.  
\begin{figure*}[th!!]
\centering
\includegraphics[scale=0.7]{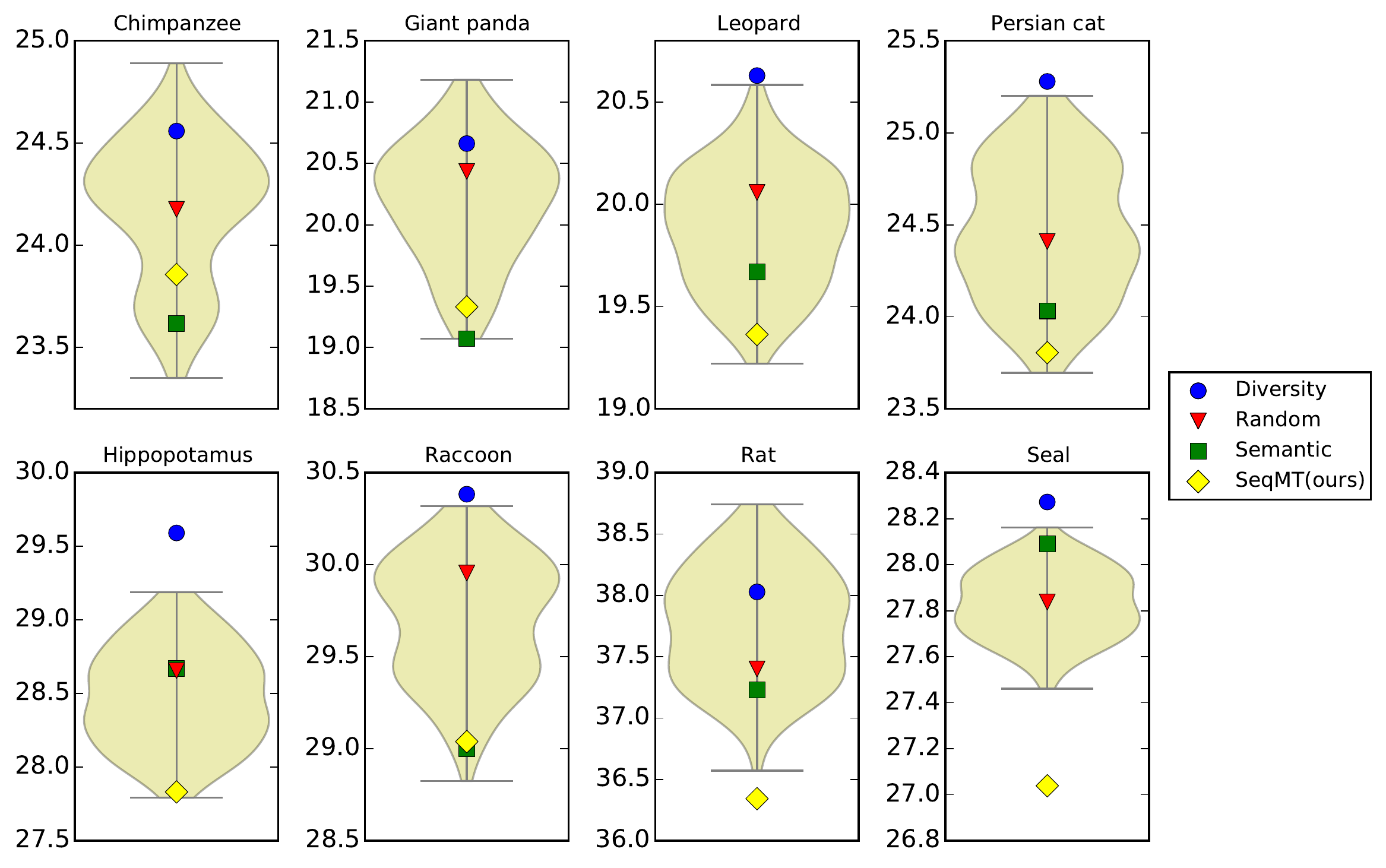}
\caption{Study of different task order strategies in the experiment with \textit{AwA} dataset. 
Four main baselines SeqMT, Semantic, Random, Diversity have a distinctive marker and color, and their vertical location captures averaged error rate performance (shown on the vertical axis). 
The performance of all possible orders is visualized as a background violin plot, where one horizontal slice of the shaded area reflects how many different orders achieve this error rate performance. 
Note, symmetry of this area is used only for aesthetic purposes.
For a complete table with all the results, please refer to the Appendix~\ref{Exp}. Best viewed in colors.
}
\label{fig:easyhard_seq}
\end{figure*}
\begin{table*}[th!!]
\centering
\scalebox{0.8}{
\begin{tabular}{c|l|c|c|c|c|c|c|c|c|}		
		&Chimpanzee&	Giant panda&	Leopard&	Persian cat&	Hippopotamus&	Raccoon&	Rat&	Seal\\
\hline
Error+Compl	&$\bf{23.86\pm0.33}$	&$\bf{19.33\pm0.52}$	&$\bf{19.36\pm0.29}$	&$\bf{23.81\pm0.40}$	&$\bf{27.83\pm0.46}$	&$\bf{29.04\pm0.37}$		&$\bf{36.34\pm0.57}$	&$\bf{27.04\pm0.22}$\\ 
Error		&$24.47\pm0.42$		&$20.02\pm0.58$		&$19.97\pm0.27$		&$24.84\pm0.46$		&$29.07\pm0.55$		&$29.75\pm0.31$		&$38.00\pm0.54$		&$28.27\pm0.38$\\ 
Compl		&$23.94\pm0.32$		&$19.44\pm0.50$		&$\bf{19.36\pm0.29}$	&$\bf{23.81\pm0.40}$	&$\bf{27.83\pm0.46}$	&$\bf{29.04\pm0.37}$		&$\bf{36.34\pm0.57}$	&$\bf{27.04\pm0.22}$\\ 
\hline
\end{tabular} }
\caption{Trade-off between complexity and error terms in the proposed SeqMT strategy of choosing next task~\eqref{step2} on the AwA dataset. 
The numbers are average error rate performance over 5 tasks across 20 repeats.  
}
\label{tab:SeqMT_Err_Reg}
\end{table*}

{\bf Baselines.}
We compare our sequential learning model (SeqMT) with the multi-task algorithm from~\cite{Evgeniou}, \cite{Finkel2009} that treats all tasks symmetrically (MT). 
Specifically, MT regularizes the weight vectors for all tasks to be similar to a prototype $w_0$ that is learned jointly with the task weight vectors by solving the following optimization problem:
\begin{align}
\notag
&\min_{w_0, w_i, \xi^i_j} \|w_0\|^2+\frac{1}{n}\sum_{i=1}^n\|w_i-w_0\|^2 + \frac{C}{n}\sum_{i=1}^n\frac{1}{m_i}\sum_{j=1}^{m_i}\xi^i_j\\
&\text{subject to}\;\; y^i_j\langle w_i,x^i_j\rangle\geq 1- \xi^i_j, \;\;\ \xi^i_j\geq 0 \;\; \ \text{for all} \; i,j.
\label{MT}
\end{align}
In order to study how relevant the knowledge transfer actually is, we compare SeqMT with a linear SVM baseline that solves each task independently (IndSVM). 
As a reference, we also provide the performance of a linear SVM trained on data that is merged from all tasks (MergedSVM). 

To understand the impact that the task order has on the classification accuracy   
we compare the performance of SeqMT with baselines that learn tasks in random order (Random), and in order from easiest to hardest (Semantic) according to the human annotation as if it was given to us. 
Another baseline we found related is inspired by the diversity heuristic from~\cite{Ruvolo2013Active}. It defines the next task to be solved by maximizing~\eqref{step2} instead of minimizing it. We refer to it as Diversity. 
%

{\bf Model selection.} We perform a cross validation model selection approach for choosing the regularization trade-off parameter $C$ for each of the methods. 
In all our experiment, we select $C$ over $8$ parameter values $\{10^{-2}, 10^{-1}\dots,10^{5}\}$ using $5\times5$ fold cross-validation. 
%

{\bf Results.}
We present the results of this experiment in Figure~\ref{fig:easyhard_mt} and Figure~\ref{fig:easyhard_seq}. 
As we can see from Figure~\ref{fig:easyhard_mt}, the proposed SeqMT method outperforms MT and IndSVM algorithms in all $8$ cases.
This shows that knowledge transfer between the tasks is clearly advantageous
in this scenario, 
and it supports our claims that learning tasks sequentially is more effective than learning them jointly if not all tasks are equally related. 
%
As expected, the reference baseline MergedSVM improves over single-task baseline IndSVM in all but one case, as training with more data has better generalization ability. 
%
In some cases, the MergedSVM performs on par or even better than SeqMT and MT methods, as for example, in cases of \emph{chimpanzee} and \emph{giant panda}. 
We expect that this happens when tasks are so similar that a single hyperplane can explain most of them. 
In this case, MergedSVM benefits from the amount of data that is available to find this hyperplane. 
When the tasks are different enough, MergedSVM is unable to explain all of them with one shared hyperplane and loses to SeqMT and MT models, that learn one hyperplane per task.
This can be see, e.g. in the cases of \emph{hippopotamus} and \emph{seal}, and particularly 
much in the case of \emph{leopard}, where the MergedSVM does not improve even over independent training.
%

\begin{figure}[b]
\centering
\includegraphics[scale=0.45]{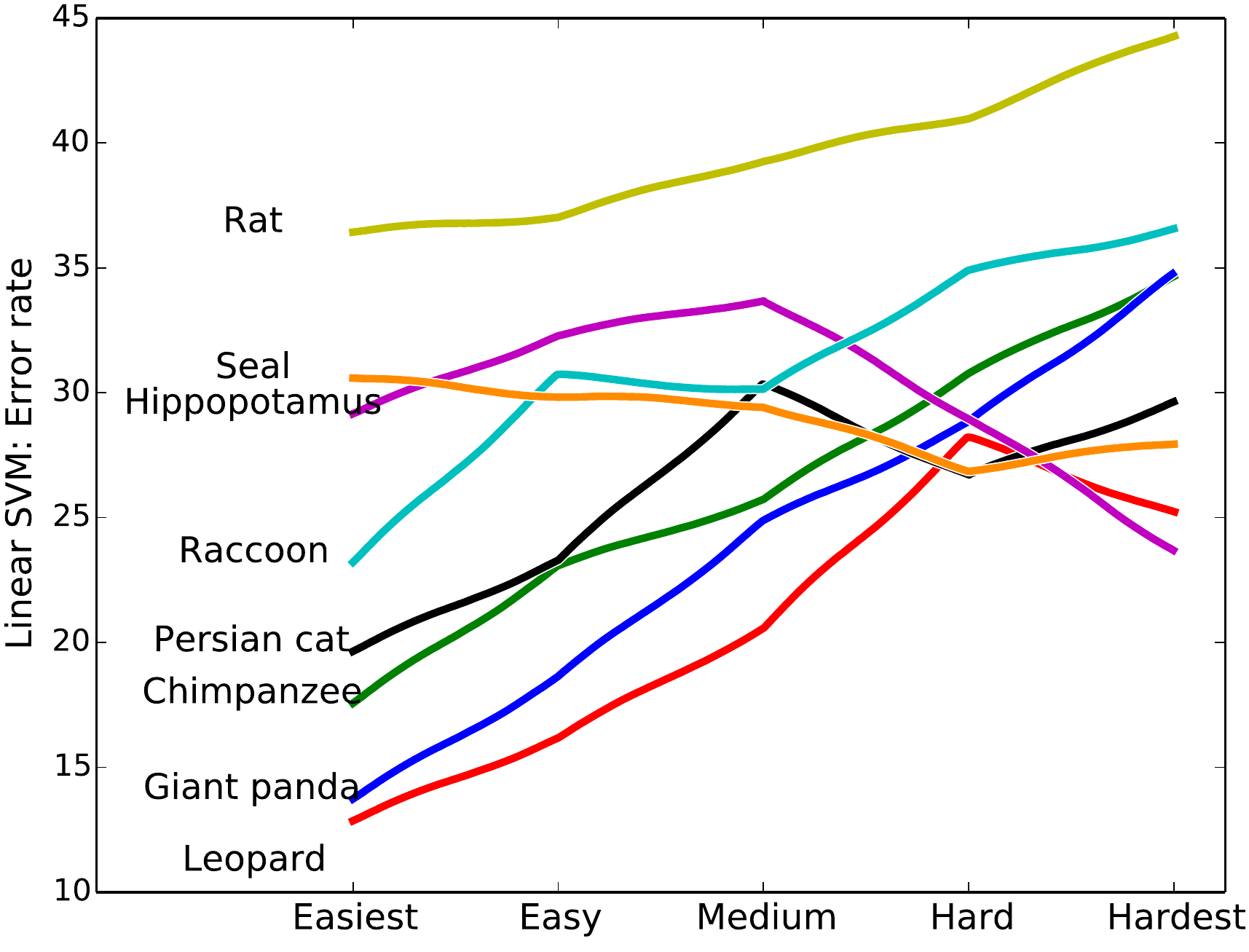}
\caption{Visualization of machine learning performance (linear SVM) w.r.t. human annotation of easy-hard tasks for AwA dataset experiment. 
Ideally, when human and machine understanding coincide, it would be a bottom-up diagonal line.  
Therefore, in cases of \emph{seal} and \emph{hippopotamus} we would not expect that learning in semantic order would lead us to the best performing strategy. Best viewed in colors.
}
\label{fig:human_machine}
\end{figure}

Next we examine the importance of the order in which the tasks are being solved,  reporting our findings in Figure~\ref{fig:easyhard_seq}. 
All methods in this study use a sequence of Adaptive SVMs as a learning algorithm for solving the next task and differ only by how the order of tasks is defined. 
%
In all $8$ cases the proposed SeqMT algorithm outperforms the Random order baseline, which learns the tasks in a random order\footnote{\scriptsize{A different random order is taken for each class for each of the $20$ repeats.}}. 
%
The Diversity algorithm is much worse than other baselines, presumably because the max heuristic of choosing the next task is not effective in this setting. 
As a reference, we also check the Semantic baseline when the tasks are being solved from easiest to hardest (as if we had prior information about the easy-hard order of the tasks\footnote{\scriptsize{This order is fixed for each class for each of the $20$ repeats.}}).  
In $6$ out of $8$ classes, the order learned by our SeqMT model (yellow rhombus) is better or on par with the Semantic (green square), except for classes \emph{chimpanzee} and \emph{giant panda}, where we did not manage to learn the best order. 
Interestingly, for some classes following the strategy of semantic order is worse or on par with learning them in a random order (cases with \emph{seal} and \emph{hippopotamus}).
We credit this to the fact that human perception of easiness and hardness does not always coincide with what is easy and hard to learn for a machine learning algorithm. In fact, in cases of \emph{seal} and \emph{hippopotamus}, the human and machine understanding are rather opposite: the hardest task for human is the easiest from machine learning perspective, and the easiest task for human is hardest or medium hard for the learning algorithm. 
Hence, learning these classes in random order leads to better results than learning in a fixed unfavourable order.
We check this by computing the error rates of single SVMs trained per each task: easiest, easy, medium, hard and hardest as defined by human studies and visualize the results in Figure~\ref{fig:human_machine}. 

Finally, for each class we compute the performance of all possible orders to learn $5$ tasks, which result in $120$ baselines\footnote{\scriptsize{One baseline defines one fixed order across all $20$ repeats. In SeqMT, we learn an adaptive order that can differ across the repeats.}}. 
We visualize the performance of all orders as a violin plot~\cite{violin}, where one horizontal slice of the shaded area reflects how many different orders achieve this error rate (performance stated on the vertical axis).
Overall, SeqMT is highly competitive with best possible fixed orders, clearly outperforming them in two cases of \emph{rat} and \emph{seal} (rhombus is lower than the yellow area), and loosing in \emph{chimpanzee}, which we have observed before. 
Thus, learning the adaptive order of tasks based on the training sets is advantageous to solving them in a fixed order. 

We also study the importance of the two terms in the objective function~\eqref{step2} for choosing the next task. 
For this, we compare our algorithm to two simplifications: choosing the next task based on the training error only (Error) and choosing the next task based on the complexity term only (Compl). 
The results in Table~\ref{tab:SeqMT_Err_Reg} suggest that the complexity term, i.e. the similarity 
between tasks, is the more important component, but that its combination with the error term 
achieves never worse and sometimes even better results. 

To conclude, our proposed algorithm orders the tasks into a learning sequence to achieve the best performance results, and is beneficial to all other strategies including the order annotated for human learning. 

\subsection{Learning subsequences of related attributes} 
We focus on $10$ attributes that describe shoe models~\cite{whittlesearch}: \emph{pointy at the front, open, bright in color, covered with ornaments, shiny, high at the heel, long on the leg, formal, sporty, feminine} and $10$ classes from the \emph{Shoes} dataset: \emph{athletic, boots, clogs, flats, heels, pumps, rain boots, sneakers, stiletto, wedding shoes}. 
%
Attribute description comes in form of class ranking from $1$ to $10$, with $10$ denoting class that ``has it the most'' and $1$ denoting class that ``has it the least''.
We form $10$ binary classification tasks, one for each attribute, 
using samples from top-2 classes as positive (classes with $10$ and $9$ ranks) and samples from bottom-2 classes as negative (classes with $1$ and $2$ ranks). For more clarifications on attribute-class description, see the Appendix~\ref{Exp}.
For each task we balance $50$ vs $50$ training images and $300$ vs $300$ test images, randomly sampled from each class in equal amount. 
The data between different tasks does not overlap. 
As feature representation, we use $960$ dimensional GIST descriptor concatenated with $L_1$-normalized $30$ dimensional color descriptor, 
augmented with a unit element as bias term. 

{\bf Baselines.}
In addition to all baselines described in the previous section, we add the MultiSeqMT method that allows to learn multiple subsequences of attributes (with the information transfer within a subsequence). Additionally we include a baseline RandomMultiSeq that learns attributes in random order with an option to randomly start a new subsequence. 

\begin{table}[ht]
\centering
\begin{tabular}{c|l|c|c}
& Methods  	& Average error & ~\\ \hline
&IndSVM		&$10.34\pm0.13$ \\
&MergedSVM	&$29.67\pm0.10$\\
&MT		&$10.37\pm0.13$\\
&SeqMT (ours)		&$10.96\pm0.12$\\
&MultiSeqMT (ours)	&$\bf{9.95\pm0.12}$\\\hline\hline
&Diversity	&$12.66\pm0.17$\\
&Random		&$12.14\pm0.20$\\
&RandomMultiSeq	&$10.89\pm0.14$\\
\end{tabular} 
\caption{Learning subsequences of related attributes on Shoes dataset. 
We compare the proposed MultiSeqMT and SeqMT methods with the multi-task (MT) and the single-task (IndSVM) baselines, and report the MergedSVM result as a reference baseline. 
We examine the importance of subsequences in which the tasks are being solved and compare our methods with Diversity, Random and RandomMultiSeq baselines. 
The numbers correspond to average error rate performance over $10$ tasks across $20$ repeats (the lower the better). 
The best result is highlighted in {\bf boldface}.
}
\label{tab:shoes}
\end{table}

{\bf Results.}
We present the main results of this experiment in Table~\ref{tab:shoes}. 
As we can see from it, the proposed MultiSeqMT method outperforms all other baselines and is a favourable strategy in this scenario. 
It is better than the SeqMT model which confirms that learning multiple subsequences is advantageous, when not all given tasks are equally related. 
The single-task learning baseline IndSVM is rather strong and performs on par with the Multi-task learning MT baseline, possibly because multi-task learning is negatively affected by it forcing transfer between unrelated tasks.
As expected, MergedSVM is unable to explain all tasks with one hyperplane and performs very poorly in this case. 

Similarly to the previous experiment, we examine the importance of sequences and subsequences in which the tasks are being solved. 
First, we compare the performance of the MultiSeqMT and SeqMT methods with the baselines that learn tasks in certain order (last three rows in the Table~\ref{tab:shoes}), and then we will share our findings about the learned subsequences of attributes. 

As we can see from Table~\ref{tab:shoes}, MultiSeqMT is able to order the tasks into subsequences in the most effective way. 
Learning multiple random subsequences as RandomMultiSeq does is better than learning a single sequence of all tasks, as SeqMT, Random and Diversity baselines do. 
However since SeqMT performs on par with RandomMultiSeq and clearly better than Random baseline, we conclude, that even with one sequence we are able to learn a good order of tasks that is discretely affected by transfer between unrelated tasks. 
The Diversity baseline is worse than other baselines also in this setting. 

Finally, we analyze the subsequences that MultiSeqMT has learned, finding some relatively 
stable patterns across the repeats. 
There are six attributes, \emph{shiny, high at the heel, pointy at front, feminine, open} and \emph{formal}, that can benefit from each other and often form a subsequence of related tasks. 
Inside the group, the attributes \emph{shiny} and \emph{high at the heel} frequently start the subsequence and transfer happens between both of interchangeably. 
The next attributes that often follow the previous two are \emph{pointy at front} and \emph{feminine}; they are also closely related and interchangeable in order.
The attribute \emph{open} is not always in the subsequence, but once it is included, this attribute transfers to \emph{formal}, which often ends the subsequence.

The remaining four attributes, \emph{bright in color, covered with ornaments, long on the leg} and \emph{sporty}, either form smaller subsequences, sometimes of two tasks only, or they appear as separate tasks. 
Occasionally there is transfer from \emph{long on the leg} attribute to \emph{covered with ornaments}, which 
we credit to the fact the shoe class \emph{boots} shares a high rank for both of those attributes. 
In half of the cases, the attributes \emph{sporty} and \emph{bright in color} are not related to the other tasks and form their own subsequences.

\section{Conclusion}

In this work, we proposed to solve multiple tasks in a sequential manner and studied the question if and how the order in which a learner solves a set of tasks influences its overall performance.
%
%
%
First, we provide a theoretical result: a generalization bound that can be used to access the quality of the learning order. 
Secondly, we proposed a principled algorithm for choosing an advantageous order based on the  theoretical result.
Finally, we tested our algorithm on two datasets and showed that: 1) learning multiple tasks sequentially can be more effective than learning them jointly; 2) the order in which tasks are solved effects the overall classification performance; 
3) our method is able to automatically discover a beneficial order.
%
%
%

A limitation of our model is that currently it allows to transfer only from the previous task to solve the current one, hence it outputs a sequence of related tasks or multiple task subsequences. 
In future work, we plan to extend our model by relaxing this condition and allowing the tasks to be organized in a tree, or a more general graph structure.

\appendix
\section{Proof of Theorem 1}
\label{ProofThm1}
We apply PAC-Bayesian theory to prove a generalization bound for the case of sequential task solving. For more details on it see \cite{catoni2007pac, Langford05, Seeger}.

Assume that the learner observes a sequence of tasks in a fixed order, $t_1,...,t_n$, with corresponding training sets, $S_1,...,S_n$, where $S_i=\{(x^i_1,y^i_1),...,(x^i_{m_i},y^i_{m_i})\}$ consists of $m_i$ i.i.d. samples from a task-specific data distribution $D_i$.
We assume that all tasks share the same input set $\mathcal{X}$ and output set  $\mathcal{Y}$ and that the learner uses the same loss function $l:\mathcal{Y}\times\mathcal{Y}\rightarrow[0,1]$ and hypothesis set $H\subset\{h:\mathcal{X}\rightarrow\mathcal{Y}\}$ for solving these tasks.
The learner solves only one task at a time by using some arbitrary but fixed deterministic algorithm $\mathcal{A}$ that produces a posterior distribution $Q_i$ over $H$ based on training data $S_i$ and some prior knowledge $P_i$, which is also expressed in form of probability distribution over the hypothesis set.
Moreover, we assume that the solution $Q_i$ plays the role of a prior for the next task, i.e. $P_{i+1}=Q_i$ ($P_1$ is just some fixed distribution, $Q_0$). 
For making predictions for task $t_i$ the learner uses the Gibbs predictor, associated with the corresponding posterior distribution $Q_i$.
For an input $x\in\mathcal{X}$ this randomized predictor samples $h\in H$ according to $Q_i$ and returns $h(x)$.
The goal of the learner is to perform well on all tasks, $t_1,...,t_n$, i.e. to minimize the average expected error of the Gibbs classifiers defined by $Q_1,\dots,Q_n$:
\begin{align}
\notag
\er = \frac{1}{n}\sum_{i=1}^n \er_i(Q_i(Q_{i-1}, S_i))=\\
\frac{1}{n}\sum_{i=1}^n\mathbf{E}_{(x,y)\sim D_i}\mathbf{E}_{h\sim Q_i}l(h(x),y).
\label{def:expected_error}
\end{align}
Since the data distributions of the tasks $t_1,...,t_n$ are unknown, one can not directly compute \eqref{def:expected_error}. 
However, it can be approximated by the empirical error based on the observed data:
\begin{align}
\notag
\widehat{\er} = \frac{1}{n}\sum_{i=1}^n \widehat{\er}_i(Q_i(Q_{i-1}, S_i))=\\
\frac{1}{n}\sum_{i=1}^n\frac{1}{m_i}\sum_{j=1}^{m_i}\mathbf{E}_{h\sim Q_i}l(h(x^i_i),y^i_j).
\label{def:empirical_error}
\end{align}
The following theorem provides an upper bound on the difference between the two quantities~\eqref{def:expected_error} and~\eqref{def:empirical_error}:
\begin{theorem}
For any fixed distribution $Q_0$, learning algorithm $\mathcal{A}$ and any $\delta>0$  the following inequality holds with probability at least $1-\delta$ (over sampling the training sets $S_1,...,S_n$):
\begin{align}
\notag
\er\leq\eer+&
\frac{1}{n\sqrt{\bar m}}\KL\big(Q_1\times\cdots\times Q_n||Q_0\times\cdots\times Q_{n-1}\big)\\
+&\frac{1}{8\sqrt{\bar m}} - \frac{\log\delta}{n\sqrt{\bar m}},
\label{thm1_equation}
\end{align}

where $Q_i=\mathcal{A}(Q_{i-1}, S_i)$ is a posterior distribution for the task $t_i$ learned by $\mathcal{A}$ based on $Q_{i-1}$ and $S_i$, $\bar{m}=\left(\frac{1}{n}\sum_{i=1}^n\frac{1}{m_i}\right)^{-1}$ is the harmonic mean of the sample sizes and $\KL$ denotes Kullback-Leibler divergence.
\label{theorem1}
\end{theorem}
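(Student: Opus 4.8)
The plan is to adapt the standard exponential-moment (Donsker--Varadhan) derivation of PAC-Bayesian bounds to the sequential setting, chaining the $n$ per-task change-of-measure steps so as to exploit that the prior $Q_{i-1}$ used for task $t_i$ is a function of $S_1,\dots,S_{i-1}$ only and is therefore independent of $S_i$. The single probabilistic ingredient is the elementary fact that, for any \emph{fixed} $h\in H$, any task index $i$, and any $\lambda>0$, the quantity $\eer_i(h)$ is an average of $m_i$ i.i.d.\ terms lying in $[0,1]$ with mean $\er_i(h)$, so Hoeffding's lemma gives $\mathbf{E}_{S_i}\exp\!\big(\lambda(\er_i(h)-\eer_i(h))\big)\le\exp\!\big(\lambda^2/(8m_i)\big)$.

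I would then perform the change of measure task by task. For each $i$, the Donsker--Varadhan variational inequality applied with prior $Q_{i-1}$ and posterior $Q_i$ to the map $h\mapsto\lambda(\er_i(h)-\eer_i(h))$ gives, using $\er_i(Q_i)-\eer_i(Q_i)=\mathbf{E}_{h\sim Q_i}[\er_i(h)-\eer_i(h)]$ by linearity,
\begin{equation*}
\lambda\big(\er_i(Q_i)-\eer_i(Q_i)\big)\le \KL(Q_i\|Q_{i-1})+\xi_i ,
\end{equation*}
where $\xi_i:=\log\mathbf{E}_{h\sim Q_{i-1}}\exp\!\big(\lambda(\er_i(h)-\eer_i(h))\big)$ (the inequality being vacuous whenever $\KL(Q_i\|Q_{i-1})=\infty$). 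The crucial point is that $Q_{i-1}$ does not depend on $S_i$, so conditioning on $S_1,\dots,S_{i-1}$ one may interchange $\mathbf{E}_{S_i}$ with $\mathbf{E}_{h\sim Q_{i-1}}$ and apply the Hoeffding bound to get $\mathbf{E}_{S_i}\!\big[e^{\xi_i}\mid S_1,\dots,S_{i-1}\big]\le e^{\lambda^2/(8m_i)}$. Consequently $\prod_{j\le i}e^{\xi_j-\lambda^2/(8m_j)}$ is a supermartingale for the filtration generated by $S_1,\dots,S_i$, and peeling tasks off from $n$ down to $1$ via the tower rule yields $\mathbf{E}_{S_1,\dots,S_n}\exp\!\big(\sum_i\xi_i-\tfrac{\lambda^2}{8}\sum_i\tfrac1{m_i}\big)\le1$. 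Markov's inequality then gives, with probability at least $1-\delta$, the bound $\sum_i\xi_i\le\tfrac{\lambda^2}{8}\sum_i\tfrac1{m_i}+\log\tfrac1\delta$.

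To conclude, I would sum the change-of-measure inequality over $i=1,\dots,n$, insert the high-probability bound on $\sum_i\xi_i$, divide through by $\lambda n$, use $\tfrac1n\sum_i\tfrac1{m_i}=\tfrac1{\bar m}$, and set $\lambda=\sqrt{\bar m}$; rewriting $\sum_{i=1}^n\KL(Q_i\|Q_{i-1})=\KL(Q_1\times\cdots\times Q_n\,\|\,Q_0\times\cdots\times Q_{n-1})$ by the additivity of KL divergence over product measures then reproduces exactly the inequality~\eqref{thm1_equation}. I expect the supermartingale/interchange step to be the only delicate point: it is precisely where the hypotheses that $\mathcal{A}$ is deterministic and that each task's posterior serves as the next task's prior ($P_{i+1}=Q_i$) are used, and one must check that nothing inside $\xi_i$ secretly depends on $S_i$ before the Fubini swap. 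Hoeffding's lemma, the Donsker--Varadhan identity, and the optimization over $\lambda$ are all routine.
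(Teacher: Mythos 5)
Your proposal is correct and follows essentially the same route as the paper's proof: a Donsker--Varadhan change of measure (yours done per task and summed, the paper's done once on the product space --- equivalent by additivity of $\KL$ over products), the observation that $Q_{i-1}$ is independent of $S_i$ so the sample expectation factorizes (your supermartingale/tower argument is exactly the paper's peeling of the product expectation), Hoeffding's lemma per task, Markov's inequality, and the choice $\lambda\propto\sqrt{\bar m}$. The only differences are cosmetic reparametrizations (your $\lambda$ is the paper's $\lambda/n$), so no further comparison is needed.
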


\begin{proof}
First we use Donsker-Varadhan's variational formula \cite{Seldin12} to change the expectation over posteriors $(Q_1,...,Q_n)$ to the expectation over priors $(Q_0,Q_1,...,Q_{n-1})$:
\begin{align}
\notag
\er-\eer \leq  \ \frac{1}{\lambda}\Big(\KL\big(Q_1\times\cdots\times Q_n||Q_0\times\cdots\times Q_{n-1}\big)  
\\
+ \log\underset{h_1\sim Q_0}{\mathbf{E}}...\underset{h_n\sim Q_{n-1}}{\mathbf{E}}\exp\Big(\frac{\lambda}{n}\sum_{i=1}^n(\er_i(h_i)-\eer_i(h_i))\Big)\Big),
\label{KL-inequality}
\end{align}
where $\er_i(h)$ is the expected loss of a hypothesis $h$ computed with respect to the data distribution of task $t_i$ and $\eer_i(h)$ is the corresponding empirical loss, computed on $S_i$.
This inequality holds for any $\lambda>0$.

Note, that $Q_i$ may depend on $S_1,...,S_i$, but does not depend on $S_{i+1},...,S_n$. Therefore:
\begin{align}
\notag
\underset{S_1\cdots S_n}{\mathbf{E}}&\underset{h_1\sim Q_0}{\mathbf{E}}\!...\!\underset{h_n\sim Q_{n-1}}{\mathbf{E}}\!\exp\!\left(\!\frac{\lambda}{n}\sum_{i=1}^n(\er_i(h_i)-\eer_i(h_i))\!\right)\!=\\
\notag
&\underset{h_n\sim Q_{0}}{\mathbf{E}}\underset{S_1}{\mathbf{E}}\exp\left(\frac{\lambda}{n}(\er_1(h_1)-\eer_1(h_1))\right)\cdots\\
&\underset{h_n\sim Q_{n-1}}{\mathbf{E}}\underset{S_n}{\mathbf{E}}\exp\left(\frac{\lambda}{n}(\er_n(h_n)-\eer_n(h_n))\right).
\label{long_expectations}
\end{align}

We fix $h_n\in H$. Then we can rewrite the last term of \eqref{long_expectations} in the following way:
\begin{align}
\notag
\exp\left(\frac{\lambda}{n}(\er_n(h_n)-\widehat{\er}_n(h_n))\right)=\\
\prod_{j=1}^{m_n}\exp\left(\frac{\lambda}{nm_n}\left(\er_n(h_n)-l(h_n(x^n_{j}),y^n_{j})\right)\right).
\end{align}
Since the data points in $S_n$ are i.i.d., all terms in this product are independent and take values between $\frac{\lambda(\er_n(h_n)-1)}{nm_n}$ and $\frac{\lambda \er_n(h_n)}{nm_n}$. %
Therefore, by Hoeffding's lemma \cite{Hoeffding:1963}, we obtain that the last term of \eqref{long_expectations} is bounded by a constant:
\begin{equation}
\notag
\underset{h_n\sim Q_{n-1}}{\mathbf{E}}\!\underset{S_n}{\mathbf{E}}\!\exp\!\left(\!\frac{\lambda}{n}(\er_n(h_n)\!-\!\eer_n(h_n))\!\right)\leq\exp\left(\frac{\lambda^2}{8n^2m_n}\right)
\end{equation}

We repeat the same procedure for all other tasks and obtain that:
\begin{align}
\notag
\underset{S_1...S_n}{\mathbf{E}}\underset{h_1\sim Q_0}{\mathbf{E}}\!...\!\underset{h_n\sim Q_{n-1}}{\mathbf{E}}\!&\exp\!\left(\!\frac{\lambda}{n}\sum_{i=1}^n(er_i(h_i)\!-\!\eer_i(h_i))\!\right)\!\leq\\
&\exp\left(\frac{\lambda^2}{8n\bar{m}}\right),
\end{align}
where $\bar{m}=\left(\frac{1}{n}\sum_{i=1}^n\frac{1}{m_i}\right)^{-1}$. 
Therefore, by Markov's inequality, with probability at least $1-\delta$:
\begin{align}
\notag
\underset{h_1\sim Q_0}{\mathbf{E}}...\underset{h_n\sim Q_{n-1}}{\mathbf{E}}&\exp\left(\frac{\lambda}{n}\sum_{i=1}^n(er_i(h_i)-\eer_i(h_i))\right)\leq\\
\frac{1}{\delta}&\exp\left(\frac{\lambda^2}{8n\bar{m}}\right).
\label{Markovs_inequality}
\end{align}

By combining \eqref{Markovs_inequality} with \eqref{KL-inequality} we get:
\begin{align}
\notag
\er\leq\eer +& \frac{1}{\lambda}\KL\big(Q_1\times\cdots\times Q_n||Q_0\times\cdots\times Q_{n-1}\big)\\
 +& \frac{\lambda}{8n\bar{m}}-\frac{1}{\lambda}\log\delta.
\end{align}
By setting $\lambda=n\sqrt{\bar{m}}$ we obtain the final result.
\end{proof}

Theorem~\ref{theorem1} holds only for tasks that are given to the learner in an arbitrary but fixed order, which must be chosen before observing the sample sets $S_1,\dots,S_n$. 
We can, however, extend it to hold uniformly for all orders of 
tasks: 
for each possible task order, $\pi\in\mathcal{S}_n$, where $\mathcal{S}_n$
is the \emph{symmetric group}, we use~\eqref{thm1_equation} 
with confidence parameter $\delta/n!$. We then combine 
all inequalities (of which there are $n!$ many) using the union bound, 
thereby obtaining the following generalization:
\begin{theorem}
For any fixed distribution $Q_0$, any learning algorithm $\mathcal{A}$ and any $\delta>0$ with probability at least $1-\delta$ (over sampling the training sets $S_1,...,S_n$) the following inequality holds uniformly for any order $\pi\in \mathcal{S}_n$: 

\begin{align}\label{thm2_equation}
&\er\leq\eer
+\frac{1}{8\sqrt{\bar{m}}}+\frac{\log n}{\sqrt{\bar{m}}}-\frac{\log\delta}{n\sqrt{\bar{m}}}+
\\
\notag
\frac{1}{n\sqrt{\bar{m}}}&\KL\big(Q_{\pi(1)}\times\cdots\times Q_{\pi(n)}||Q_0\times\cdots\times Q_{\pi(n-1)}\big),
\end{align}
where $Q_{\pi(i)}=\mathcal{A}(Q_{\pi(i-1)}, S_{\pi(i)})$, $\bar{m}=\left(\frac{1}{n}\sum_{i=1}^n\frac{1}{m_i}\right)^{-1}$ and $\pi(0)=0$.
\label{theorem2}
\end{theorem}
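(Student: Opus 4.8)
The plan is to derive Theorem~\ref{theorem2} from Theorem~\ref{theorem1} by a union bound over the symmetric group $\mathcal{S}_n$, which is essentially the argument sketched in the paragraph preceding the statement. First I would fix an arbitrary order $\pi\in\mathcal{S}_n$. Since $\pi$ is fixed \emph{before} the training sets are observed, the relabeled sequence of tasks $t_{\pi(1)},\dots,t_{\pi(n)}$ with training sets $S_{\pi(1)},\dots,S_{\pi(n)}$ satisfies exactly the hypotheses of Theorem~\ref{theorem1}: the algorithm $\mathcal{A}$ is deterministic, $Q_0$ is fixed, and $Q_{\pi(i)}=\mathcal{A}(Q_{\pi(i-1)},S_{\pi(i)})$ depends only on $S_{\pi(1)},\dots,S_{\pi(i)}$. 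Applying Theorem~\ref{theorem1} to this order with confidence parameter $\delta/n!$ in place of $\delta$ yields that, with probability at least $1-\delta/n!$,
\begin{align}
\notag
\er\leq\eer+&\frac{1}{n\sqrt{\bar m}}\KL\big(Q_{\pi(1)}\times\cdots\times Q_{\pi(n)}\,\|\,Q_0\times\cdots\times Q_{\pi(n-1)}\big)\\
\notag
+&\frac{1}{8\sqrt{\bar m}}-\frac{\log(\delta/n!)}{n\sqrt{\bar m}}.
\end{align}
Note that $\bar m$ is unchanged, since it is the harmonic mean of $m_1,\dots,m_n$ and a permutation does not alter the multiset of sample sizes.

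Next I would take the union bound over all $n!$ permutations. Let $B_\pi$ be the (bad) event that the above inequality fails for order $\pi$; then $\Pr[B_\pi]\leq\delta/n!$, so $\Pr\big[\bigcup_{\pi\in\mathcal{S}_n}B_\pi\big]\leq n!\cdot(\delta/n!)=\delta$. Hence with probability at least $1-\delta$ the bound holds simultaneously for every $\pi\in\mathcal{S}_n$, which is exactly the uniform-over-orders guarantee claimed.

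The last step is to simplify the constant $-\log(\delta/n!)/(n\sqrt{\bar m})$ into the form stated in~\eqref{thm2_equation}. Writing $-\log(\delta/n!)=\log(n!)-\log\delta$ and using the crude bound $\log(n!)\leq n\log n$, we get $-\log(\delta/n!)/(n\sqrt{\bar m})\leq \log n/\sqrt{\bar m}-\log\delta/(n\sqrt{\bar m})$, which together with the $\tfrac{1}{8\sqrt{\bar m}}$ term reproduces the right-hand side of~\eqref{thm2_equation} exactly. This is entirely routine; there is essentially no obstacle here. The only point requiring a moment's care — and the closest thing to a ``hard part'' — is the observation that the order $\pi$ must be chosen independently of the data for Theorem~\ref{theorem1} to apply, so the union bound is genuinely needed: one cannot simply plug a data-dependent order into Theorem~\ref{theorem1}. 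Once the bound holds uniformly in $\pi$, however, it is valid a posteriori for any $\pi$ selected by looking at $S_1,\dots,S_n$ (in particular the order produced by Algorithm~\ref{alg1}), which is the point exploited in Theorem~\ref{bound} and in the main text.
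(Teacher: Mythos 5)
Your proposal is correct and follows exactly the paper's own argument: apply Theorem~\ref{theorem1} to each fixed permutation with confidence parameter $\delta/n!$, take a union bound over the $n!$ orders, and absorb $\log(n!)\leq n\log n$ into the $\log n/\sqrt{\bar m}$ term. The remarks about $\bar m$ being permutation-invariant and about why the union bound is genuinely needed for a data-dependent order are accurate and consistent with the paper's discussion.
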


Theorem 1 is an instantiation of Theorem~\ref{theorem2} for the special case of binary classification using linear predictors.
Assume $\mathcal{Y}=\{+1,-1\}$, $\mathcal{X}\in \mathbb{R}^d$, and let $H$ be a set of linear predictors $\{\sign\langle w,x\rangle\}$, where $w\in\mathbb{R}^d$ is a weight vector. 
We also assume that the learner uses $0/1$ loss, $l(y_1,y_2)=\llbracket y_1\neq y_2\rrbracket$.
In this case the expected error of the Gibbs predictor is at least half the expected error of the corresponding majority vote predictor \cite{mcallester2003simplified}.
Therefore, by multiplying the right hand side of \eqref{thm2_equation} by a factor of $2$, one obtains a generalization bound for deterministic majority vote classifier.

The case of linear predictors can be captured by the PAC-Bayesian setting if prior and posterior distributions are Gaussian \cite{Herbrich}.
More formally, assume that $Q_i=\mathcal{N}(w_i, \Id)$ for $i=0,...,n$, i.e. Gaussian distributions with unit variance that differ only by the value of their mean vectors.   
Due to the symmetry of the Gaussian distribution, the predictor defined by $w_i$ is equivalent to the majority vote predictor corresponding to distribution $Q_i$.
Hence one can use the result of Theorem~\ref{theorem2} in the case of deterministic linear predictors.
We also assume that the learner uses an algorithm, $\mathcal{A}$, that for every task $t_i$ returns $w_i$ based on the mean vector of the used prior distribution and training data $S_i$. 

By computing the complexity term from \eqref{thm2_equation} we obtain:
\begin{align}
\notag
\KL(Q_{\pi(1)}\!\times\!\cdots\!\times\! Q_{\pi(n)}||Q_0\!\times\!\cdots\!\times\! Q_{\pi(n-1)})\!=\\
\!\sum_{i=1}^{n}\!\KL(Q_{\pi(i)}||Q_{\pi(i-1)})\! =\! \sum_{i=1}^n\frac{||w_{\pi(i)}-w_{\pi(i-1)}||^2}{2},
\label{KL}
\end{align}
where $\pi(0)=0$, $w_0=\textbf{0}$ and $w_{\pi(i)}=\mathcal{A}(w_{\pi(i-1)}, S_{\pi(i)})$. 
Note that the loss of the Gibbs classifier defined by $Q_i$ on a point $(x,y)$ is given by $\bar{\Phi}\Big(\frac{yx^Tw_i}{||x||}\Big)$, where $\bar{\Phi}(z)=\frac{1}{2}\left(1-\Erf\left(\frac{z}{\sqrt{2}}\right)\right)$ and $\Erf(z)=\frac{2}{\sqrt{\pi}}\int_0^ze^{-t^2}dt$ is the Gauss error function~\cite{Germain:2009, langford2002}.
Together with \eqref{KL} it gives us the result of Theorem 1.

\section{Additional information for MultiSeqMT}
\label{MSubSeq}
Assume that, as in the case of learning in a fixed order described in Theorem~\eqref{theorem1}, $n$ tasks $t_1,...,t_n$ are processed one after another from $t_1$ till $t_n$.
We extend the sequential learning scenario by allowing the learner to not transfer information between some of the subsequent task.
Specifically, if the posterior distribution $Q_{i}$ obtained for task $t_{i}$ is not informative with respect to the next task, $t_{i+1}$, the learner may use original, fixed distribution $Q_0$ as a prior for $t_{i+1}$ instead of $Q_{i}$.
Such scenario can be described by introducing the set of flags $b_i\in\{0,1\}$ for $i=2,...,n$, where $b_i=1$ means that information from task $t_{i-1}$ is transferred to the task $t_{i}$, in other words $Q_{i-1}$ is used as a prior for solving $t_{i}$, while $b_i=0$ denotes that there is no transfer from $t_{i-1}$ to $t_i$ and $Q_0$ is used as a prior $P_i$.

In the same manner, as we proved Theorem~\eqref{theorem1}, we can prove the following generalization bound for the case of sequential learning with ability to not transfer information between subsequent tasks:

\begin{theorem}
For any fixed distribution $Q_0$, set of flags $b_i\in\{0,1\}$ for $i=2,...,n$, learning algorithm $\mathcal{A}$ and any $\delta>0$  the following inequality holds with probability at least $1-\delta$ (over sampling the training sets $S_1,...,S_n$):
\begin{align}
\notag
\er\leq\eer+&
\frac{1}{n\sqrt{\bar m}}\KL\big(Q_1\times\cdots\times Q_n||P_1\times\cdots\times P_n\big)\\
+&\frac{1}{8\sqrt{\bar m}} - \frac{\log\delta}{n\sqrt{\bar m}},
\label{thm1_equation}
\end{align}

where:
\begin{eqnarray*}
P_i &=& \begin{cases} Q_0 &\mbox{if } i=1 \;\;\mbox{or}\;\; b_i=0 \\ 
Q_{i-1} & \mbox{if } b_i=1 \end{cases}\\
Q_i&=&\mathcal{A}(P_i, S_i)\\
\bar{m}&=&\left(\frac{1}{n}\sum_{i=1}^n\frac{1}{m_i}\right)^{-1}.
\end{eqnarray*}
\label{theorem1_ext}
\end{theorem}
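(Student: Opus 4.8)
The plan is to mimic the proof of Theorem~\ref{theorem1} almost verbatim, after isolating the single structural fact that makes that argument go through: the \emph{only} property of the priors it uses is that each prior is a function of the earlier training sets alone, i.e.\ $P_i$ depends on $S_1,\dots,S_{i-1}$ but is independent of $S_i,\dots,S_n$. Since the flags $b_2,\dots,b_n$ are fixed before any data is observed, each $P_i$ equals either the data-independent $Q_0$ or $Q_{i-1}=\mathcal{A}(P_{i-1},S_{i-1})$, which by induction is a deterministic function of $S_1,\dots,S_{i-1}$; in either case the required ``no dependence on the future'' holds, so the whole chain of steps from the proof of Theorem~\ref{theorem1} transfers.

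Concretely, I would first apply Donsker--Varadhan's variational formula \cite{Seldin12} to the product measures $Q_1\times\cdots\times Q_n$ and $P_1\times\cdots\times P_n$ to get, for every $\lambda>0$,
\begin{align}
\notag
\er-\eer \leq \frac{1}{\lambda}\Big(\KL\big(Q_1\times\cdots\times Q_n||P_1\times\cdots\times P_n\big)\\
\notag
+\log\underset{h_1\sim P_1}{\mathbf{E}}\cdots\underset{h_n\sim P_n}{\mathbf{E}}\exp\Big(\frac{\lambda}{n}\sum_{i=1}^n(\er_i(h_i)-\eer_i(h_i))\Big)\Big).
\end{align}
Next I would take $\mathbf{E}_{S_1\cdots S_n}$ of the exponential moment and peel off the tasks one at a time from $i=n$ down to $i=1$: because $h_i\sim P_i$ and $P_i$ (together with $h_1,\dots,h_{i-1}$) is determined by $S_1,\dots,S_{i-1}$, the innermost $\mathbf{E}_{S_i}$ factors out, and since the $m_i$ summands of $\er_i(h_i)-\eer_i(h_i)$ are i.i.d.\ with mean zero and lie in an interval of length $1$, Hoeffding's lemma \cite{Hoeffding:1963} bounds that inner expectation by $\exp(\lambda^2/(8n^2m_i))$, exactly as in the proof of Theorem~\ref{theorem1}. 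Multiplying the $n$ factors yields $\exp(\lambda^2/(8n\bar m))$ with $\bar m$ the harmonic mean of the $m_i$.

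Finally I would invoke Markov's inequality to conclude that with probability at least $1-\delta$ the exponential moment is at most $\frac1\delta\exp(\lambda^2/(8n\bar m))$, substitute this into the variational inequality (the KL term already appears in the desired form), and set $\lambda=n\sqrt{\bar m}$ to obtain \eqref{thm1_equation}. I expect the only real point of care to be the bookkeeping of the conditional-independence structure once $P_i$ is allowed to switch between $Q_0$ and $Q_{i-1}$: one must check that the flag mechanism never makes a prior depend on its own or a later training set, which is immediate because the $b_i$ are fixed in advance and $\mathcal{A}$ is deterministic, so the extension introduces no genuinely new obstacle compared with Theorem~\ref{theorem1}. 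If one later wants the guarantee to hold for a data-chosen subsequence structure, as used by MultiSeqMT, one would, in the spirit of Theorem~\ref{theorem2}, rerun this argument with $\delta$ rescaled by the number of admissible flag patterns (and orders) and take a union bound.
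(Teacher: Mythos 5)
Your proposal is correct and matches the paper's own treatment: the paper proves Theorem~\ref{theorem1_ext} simply by noting that the argument for Theorem~\ref{theorem1} carries over verbatim, and you have correctly identified the one fact that needs checking --- that each prior $P_i$, whether it equals $Q_0$ or $Q_{i-1}$, is determined by $S_1,\dots,S_{i-1}$ alone (because the flags $b_i$ are fixed in advance and $\mathcal{A}$ is deterministic), so the factorization of the exponential moment, Hoeffding's lemma, Markov's inequality, and the choice $\lambda=n\sqrt{\bar m}$ all apply unchanged. No gaps.
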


The result of Theorem~\eqref{theorem1_ext} holds for any, but fixed in advance order of tasks and set of flags $b_i$.
Now, we can extend it to hold uniformly for all possible partitions of tasks in subsequences and orders of tasks in each group.
First, note that there are $n!\leq n^n$ possible full orderings of $n$ tasks.
Second, there are $2^{n-1}$ possible ways to define flags $b_i$ for each task.
Therefore there are less than $n^n2^{n-1}$ possible partitions of tasks and groups and orderings inside each group.
We now let the confidence parameter to be $\delta/((2n)^n)$ and combine inequalities for all possible partitions and orderings (of which there are less than $(2n)^n$ many) using the union bound argument.
Thereby we obtain the following result:

\begin{theorem}
For any fixed distribution $Q_0$, learning algorithm $\mathcal{A}$ and any $\delta>0$ with probability at least $1-\delta$ (over sampling the training sets $S_1,...,S_n$) the following inequality holds uniformly for all orders $\pi\in\mathcal{S}$ and all set of flags $\{b_2,...,b_n\}\in\{0,1\}^{n-1}$:
\begin{align}\label{thm2_equation}
&\er\leq\eer
+\frac{1}{8\sqrt{\bar{m}}}+\frac{\log 2n}{\sqrt{\bar{m}}}-\frac{\log\delta}{n\sqrt{\bar{m}}}+
\\
\notag
\frac{1}{n\sqrt{\bar{m}}}&\KL\big(Q_{\pi(1)}\times\cdots\times Q_{\pi(n)}||P_{\pi(1)}\times\cdots\times P_{\pi(n)}\big),
\end{align}

where:
\begin{eqnarray*}
P_{\pi(i)} &=& \begin{cases} Q_0 &\mbox{if } i=1 \;\;\mbox{or}\;\; b_i=0 \\ 
Q_{\pi(i-1)} & \mbox{if } b_i=1 \end{cases}\\
Q_{\pi(i)}&=&\mathcal{A}(P_{\pi(i)}, S_{\pi(i)})\\
\bar{m}&=&\left(\frac{1}{n}\sum_{i=1}^n\frac{1}{m_i}\right)^{-1}.
\end{eqnarray*}
\label{theorem2_ext}
\end{theorem}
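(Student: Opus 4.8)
The plan is to replicate, in an enlarged form, the union-bound argument that passes from Theorem~\ref{theorem1} to Theorem~\ref{theorem2}. The starting point is Theorem~\ref{theorem1_ext} (proved exactly as Theorem~\ref{theorem1}, but with the priors $P_i$ read off from the flags $b_i$ in place of $Q_{i-1}$): for any \emph{single} configuration --- a fixed order $\pi\in\mathcal{S}_n$ together with a fixed flag vector $b=(b_2,\dots,b_n)\in\{0,1\}^{n-1}$, both chosen \emph{before} the samples are drawn --- the inequality
\[
\er\le\eer+\frac{1}{n\sqrt{\bar m}}\KL\big(Q_{\pi(1)}\times\cdots\times Q_{\pi(n)}||P_{\pi(1)}\times\cdots\times P_{\pi(n)}\big)+\frac{1}{8\sqrt{\bar m}}-\frac{\log\delta'}{n\sqrt{\bar m}}
\]
holds with probability at least $1-\delta'$ over the draw of $S_1,\dots,S_n$, where $\er$ and $\eer$ are the average expected and empirical errors of the posteriors produced by running the sequential procedure with this $(\pi,b)$, and $P_{\pi(i)}$, $Q_{\pi(i)}=\mathcal{A}(P_{\pi(i)},S_{\pi(i)})$ are as in the statement. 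Note that the randomness is over the samples only, these samples do not depend on $(\pi,b)$, and $\bar m$ --- being symmetric in $m_1,\dots,m_n$ --- is the same constant for every configuration.

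Next I would count configurations and calibrate $\delta'$. There are $n!\le n^n$ orders and $2^{n-1}\le 2^n$ flag vectors, hence at most $n^n\cdot 2^n=(2n)^n$ pairs $(\pi,b)$; this family is finite and data-independent. Applying Theorem~\ref{theorem1_ext} to each of these $\le (2n)^n$ configurations with confidence parameter $\delta'=\delta/(2n)^n$, and taking a union bound over all of them, the probability that \emph{any} of the resulting inequalities fails is at most $(2n)^n\cdot\delta/(2n)^n=\delta$. Hence, with probability at least $1-\delta$, the bound above holds simultaneously for every order and every flag vector --- which is the claimed uniform statement, and in particular it holds for whichever $(\pi,b)$ one wishes to substitute.

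It remains only to tidy up the constant. Substituting $\delta'=\delta/(2n)^n$,
\[
-\frac{\log\delta'}{n\sqrt{\bar m}}=-\frac{\log\delta}{n\sqrt{\bar m}}+\frac{\log\big((2n)^n\big)}{n\sqrt{\bar m}}=-\frac{\log\delta}{n\sqrt{\bar m}}+\frac{\log 2n}{\sqrt{\bar m}},
\]
which produces precisely the extra $\frac{\log 2n}{\sqrt{\bar m}}$ term in the final bound; every other term, including the KL complexity term and the definitions of $P_{\pi(i)}$ and $Q_{\pi(i)}$, carries over unchanged from Theorem~\ref{theorem1_ext}. I expect the only point requiring care to be the bookkeeping that legitimises the union bound: one must verify that $\mathcal{A}$ and $Q_0$ are genuinely fixed in advance (they are, by hypothesis), that the index set of $(\pi,b)$ is finite and independent of the data, and that each ``bad event'' lives in the common sample space of $S_1,\dots,S_n$; granted this, the rest is the standard PAC-Bayesian union-bound device together with the arithmetic above, and the loose count $n!\,2^{n-1}\le(2n)^n$ suffices to obtain the clean form $\log 2n$.
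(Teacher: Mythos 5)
Your argument is correct and coincides with the paper's own proof: both start from Theorem~\ref{theorem1_ext} for a fixed $(\pi,b)$, bound the number of configurations by $n!\,2^{n-1}\le(2n)^n$, apply the base theorem with confidence $\delta/(2n)^n$, and take a union bound, with the substitution $-\log\bigl(\delta/(2n)^n\bigr)/(n\sqrt{\bar m})$ yielding the extra $\log(2n)/\sqrt{\bar m}$ term. No differences worth noting.
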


We can formulate the instantiation of Theorem~\eqref{theorem2_ext} for the case of linear predictors and $0/1$ loss using Gaussian distributions as we did for proving Theorem 1 based on Theorem~\eqref{theorem2}.
As a result, we obtain the following generalization bound:

\begin{theorem}
For any deterministic learning algorithm $\mathcal{A}$ and any $\delta>0$, the following holds with probability at least $1-\delta$ over sampling the training sets $S_1,...,S_n$ uniformly for any order $\pi$ in the symmetric group $\mathcal{S}_n$ and any set of flags $\{b_2,...,b_n\}\in\{0,1\}^{n-1}$:
\begin{align}
\notag
\frac{1}{2n}&\sum_{i=1}^n\underset{(x,y)\sim D_i}{\mathbf{E}}\llbracket y\neq\sign\langle w_i,x\rangle\rrbracket\leq\\
\notag
\frac{1}{n}&\sum_{i=1}^n\Bigg[\frac{1}{m_{\pi(i)}}\sum_{j=1}^{m_{\pi(i)}}\bar{\Phi}\left(\frac{y^{\pi(i)}_j\langle w_{\pi(i)},x^{\pi(i)}_j\rangle}{||x^{\pi(i)}_j||}\right)
 +\\
 &\frac{||w_{\pi(i)}\!-\!w_{\pi(i-1)}||^2}{2\sqrt{\bar{m}}}\Bigg]
 +\frac{1}{8\sqrt{\bar{m}}} - \frac{\log\delta}{n\sqrt{\bar{m}}}+\frac{\log 2n}{\sqrt{\bar{m}}},
\label{objective_function}
\end{align}
where:
\begin{eqnarray*}
w_{\pi(i)} &=& \begin{cases} \mathcal{A}(\mathbf{0}, S_{\pi(i)}) &\mbox{if } i=1 \;\;\mbox{or}\;\; b_i=0 \\ 
\mathcal{A}(w_{\pi(i-1)}, S_{\pi(i)}) & \mbox{otherwise} \end{cases}\\
\bar{\Phi}(z)&=&\frac{1}{2}\left(1-\Erf\left(\frac{z}{\sqrt{2}}\right)\right)\\
\Erf(z)&=&\frac{2}{\sqrt{\pi}}\int_0^ze^{-t^2}dt\\
\bar{m}&=&\left(\frac{1}{n}\sum_{i=1}^n\frac{1}{m_i}\right)^{-1}.
\end{eqnarray*} 

\label{bound_ext}
\end{theorem}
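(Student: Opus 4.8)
The plan is to follow, almost verbatim, the passage that already derives Theorem~\ref{bound} from Theorem~\ref{theorem2}, except that the starting point is now Theorem~\ref{theorem2_ext}, whose uniformity ranges over all orders $\pi$ \emph{and} all flag vectors $\{b_2,\dots,b_n\}$. Thus I would not reprove anything PAC-Bayesian; I would simply instantiate Theorem~\ref{theorem2_ext} in the linear-predictor, $0/1$-loss, Gaussian-distributions regime.

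Concretely, I would first take $\mathcal{Y}=\{+1,-1\}$, $\mathcal{X}\subset\mathbb{R}^d$, the hypothesis set of linear classifiers $h_w(x)=\sign\langle w,x\rangle$, the $0/1$ loss, and set $Q_i=\mathcal{N}(w_i,\Id)$ for $i=0,\dots,n$, so that priors and posteriors differ only through their mean vectors. By symmetry of the Gaussian, $h_{w_i}$ coincides with the majority-vote classifier associated with $Q_i$, and by \cite{mcallester2003simplified} the Gibbs error $\er$ is at least one half of the average $0/1$ error of the deterministic classifiers $h_{w_i}$; moving that factor to the left-hand side of Theorem~\ref{theorem2_ext} turns it into $\tfrac{1}{2n}\sum_i\mathbf{E}_{(x,y)\sim D_i}\llbracket y\neq\sign\langle w_i,x\rangle\rrbracket$, while the additive constants $\tfrac{1}{8\sqrt{\bar m}}$, $\tfrac{\log 2n}{\sqrt{\bar m}}$ and $-\tfrac{\log\delta}{n\sqrt{\bar m}}$ are unchanged.

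Next I would evaluate the two data-dependent terms. The product KL splits as $\sum_{i=1}^n\KL(Q_{\pi(i)}\|P_{\pi(i)})$, and for unit-variance Gaussians each summand equals $\tfrac12\|w_{\pi(i)}-\mu_{\pi(i)}\|^2$, where $\mu_{\pi(i)}$ is the mean of the prior $P_{\pi(i)}$ actually used for task $t_{\pi(i)}$: namely $\mathbf{0}$ when $i=1$ or $b_i=0$, and $w_{\pi(i-1)}$ when $b_i=1$. Under the convention for $w_{\pi(i)}$ stated in the theorem (and reading $w_{\pi(i-1)}$ as $\mathbf{0}$ in the no-transfer case) this is exactly the complexity contribution $\tfrac{\|w_{\pi(i)}-w_{\pi(i-1)}\|^2}{2\sqrt{\bar m}}$ after dividing by $n\sqrt{\bar m}$. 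For the empirical term I would use the standard computation (\cite{Germain:2009,langford2002}) that the Gibbs loss of $\mathcal{N}(w_i,\Id)$ at a point $(x,y)$ equals $\bar\Phi\!\left(\tfrac{y\langle w_i,x\rangle}{\|x\|}\right)$ with $\bar\Phi(z)=\tfrac12\!\left(1-\Erf(z/\sqrt2)\right)$, so that $\eer$ becomes $\tfrac1n\sum_i\tfrac1{m_{\pi(i)}}\sum_j\bar\Phi\!\left(\tfrac{y^{\pi(i)}_j\langle w_{\pi(i)},x^{\pi(i)}_j\rangle}{\|x^{\pi(i)}_j\|}\right)$. Substituting both evaluations back into the rearranged bound yields \eqref{objective_function}.

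I do not expect a genuinely hard step once Theorem~\ref{theorem2_ext} is in hand; the work is bookkeeping. The one point needing care is that the single symbol $\|w_{\pi(i)}-w_{\pi(i-1)}\|^2$ in the claim must be read as the squared distance between $w_{\pi(i)}$ and the \emph{prior mean dictated by $b_i$} — i.e.\ $w_{\pi(i-1)}$ if $b_i=1$ and $\mathbf{0}$ if $i=1$ or $b_i=0$ — so that the per-task KL and the stated bound agree term by term. The other is that the $\log 2n$ (rather than $\log n$) factor is the correct one, which is already accounted for inside Theorem~\ref{theorem2_ext} via the union bound over the at most $(2n)^n$ pairs (order, flag vector).
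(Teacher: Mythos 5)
Your proposal is correct and takes essentially the same route as the paper, which likewise obtains this result by instantiating Theorem~\ref{theorem2_ext} with unit-variance Gaussian priors and posteriors, using the factor-two relation between Gibbs and majority-vote error, the closed-form KL between Gaussians, and the $\bar{\Phi}$ expression for the Gibbs loss of a point, with the $\log 2n$ term inherited directly from the union bound in Theorem~\ref{theorem2_ext}. Your observation that $\|w_{\pi(i)}-w_{\pi(i-1)}\|^2$ must be read as the squared distance to the prior mean dictated by $b_i$ (i.e.\ $\mathbf{0}$ when $i=1$ or $b_i=0$) is exactly the right reading of the paper's notation.
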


\begin{algorithm}[]
\caption{MultiSeqMT: Sequential Learning with Multiple Subsequences }\label{alg2}
\begin{algorithmic}[1]
\STATE {\bfseries Input} $S_1,\dots,S_n$ \COMMENT{training sets}
\STATE $T \gets \{1,2,\dots,n\}$ \COMMENT{indices of yet unused tasks}
\STATE $P \gets \{\boldsymbol{0}\}$ \COMMENT{$w$s of the last tasks in the existing subseq.}
\FOR {$i = 1$ to $n$}
\FORALL {$\tilde{w}\in P$}
\STATE $k(\tilde{w})\gets$ steps 5-8 of Algorithm 1 with\\ 
\pushcode[4] substituting $w_{\pi(i-1)}$ by $\tilde{w}$ in (4)
\ENDFOR
\STATE $w^*\gets$ minimizer of (4) w.r.t. $\tilde{w}$ with substituting \\ \pushcode[3] $w_{\pi(i-1)}$ by $\tilde{w}$ and $k$ by $k(\tilde{w})$ 
\STATE $w_{k(w^*)}\gets$ solution of (2) using $S_{k(w^*)}$ and \\
\pushcode[5] $w^*$ instead of $\tilde{w}$
\STATE $T\gets T\setminus\{k(w^*)\}$
\STATE $P \gets P\cup\{w_{k(w^*)}\}$
\IF {$w^* \neq \boldsymbol{0}$}
\STATE $P \gets P\setminus\{w^*\}$
\ENDIF
\ENDFOR
\STATE {\bfseries Return} $w_{1},\dots,w_{n}$
\end{algorithmic}
\end{algorithm}

\section{Additional information for experiments}
\label{Exp}
\begin{table*}[t!]
\centering
\scalebox{0.85}{
\begin{tabular}{l|c|c|c|c|c|c|c|c|}		
&Chimpanzee&	Giant panda&	Leopard&	Persian cat&	Hippopotamus&	Raccoon&	Rat&	Seal\\
\hline
IndSVM&		$26.34\pm0.31$	&$24.12\pm0.58$	&$20.60\pm0.27$	&$25.90\pm0.45$	&$29.60\pm0.49$	&$31.07\pm0.36$	&$39.66\pm0.66$	&$28.98\pm0.30$\\ 
MergedSVM&	$22.81\pm0.31$	&$19.08\pm0.47$	&$20.65\pm0.35$	&$24.02\pm0.44$	&$28.31\pm0.48$	&$29.40\pm0.57$	&$36.66\pm0.62$	&$27.60\pm0.33$\\ 
MT&		$24.16\pm0.35$	&$20.12\pm0.45$	&$19.71\pm0.33$	&$23.99\pm0.40$	&$27.94\pm0.52$	&$29.25\pm0.43$	&$37.41\pm0.60$	&$27.65\pm0.29$\\ 
SeqMT(ours)&	$23.86\pm0.33$	&$19.33\pm0.52$	&$19.36\pm0.29$	&$23.81\pm0.40$	&$27.83\pm0.46$	&$29.04\pm0.37$	&$36.34\pm0.57$	&$27.04\pm0.22$\\ 
Max&		$24.56\pm0.37$	&$20.66\pm0.49$	&$20.63\pm0.32$	&$25.28\pm0.34$	&$29.59\pm0.49$	&$30.38\pm0.51$	&$38.03\pm0.58$	&$28.27\pm0.37$\\ 
Error&		$24.47\pm0.42$	&$20.02\pm0.58$	&$19.97\pm0.27$	&$24.84\pm0.46$	&$29.07\pm0.55$	&$29.75\pm0.31$	&$38.00\pm0.54$	&$28.27\pm0.38$\\ 
Reg		&$23.94\pm0.32$	&$19.44\pm0.50$	&$19.36\pm0.29$	&$23.81\pm0.40$	&$27.83\pm0.46$	&$29.04\pm0.37$	&$36.34\pm0.57$	&$27.04\pm0.22$\\ 
Random		&$24.18\pm0.37$	&$20.44\pm0.46$	&$20.06\pm0.33$	&$24.41\pm0.37$	&$28.66\pm0.55$	&$29.95\pm0.48$	&$37.40\pm0.66$	&$27.84\pm0.27$\\ 
Semantic	&$23.62\pm0.32$	&$19.07\pm0.51$	&$19.67\pm0.30$	&$24.03\pm0.37$	&$28.67\pm0.47$	&$29.00\pm0.43$	&$37.23\pm0.54$	&$28.09\pm0.36$\\ 
\hline
Best	&$23.35\pm0.38$	&$19.07\pm0.51$	&$19.22\pm0.30$	&$23.69\pm0.46$	&$27.79\pm0.33$	&$28.82\pm0.46$	&$36.57\pm0.63$	&$27.46\pm0.37$\\
Worst	&$24.89\pm0.40$	&$21.18\pm0.48$	&$20.58\pm0.32$	&$25.20\pm0.39$	&$29.19\pm0.47$	&$30.32\pm0.51$	&$38.74\pm0.65$	&$28.16\pm0.28$\\ 
\end{tabular} }
\caption{Sequential learning of tasks from easiest to hardest in the AwA dataset. 
For each class and method, the numbers are average error rate and standard error of the mean over $20$ repeats.  
}
\label{tab:easyhard}
\end{table*}
\begin{table*}[t]
\centering
\scalebox{0.85}{
\begin{tabular}{|l|c|c|c|c|c|c|c|c|c|c|}
\hline	
 Attribute/Class & Athletic & Boots & Clogs & Flats & Heels & Pumps & Rain Boots & Sneakers & Stiletto & Wedding \\
\hline
 Pointy at the front & \cellcolor{blue!25}2 & 6 & 3 & 5 & \cellcolor{yellow!25}10 & \cellcolor{yellow!25}9 & 4 & \cellcolor{blue!25}1 & 8 & 7 \\
\hline 
 Open & 3 & \cellcolor{blue!25}2 & 8 & 5 & 7 & 6 & \cellcolor{blue!25}1 & 4 & \cellcolor{yellow!25}9 & \cellcolor{yellow!25}10\\
\hline 
 Bright in color & 6 & \cellcolor{blue!25}1 & \cellcolor{blue!25}2 & 8 & 4 & 3 & \cellcolor{yellow!25}10 & 7 & \cellcolor{yellow!25}9 & 5\\
\hline 
 Covered with ornaments & 4 & \cellcolor{yellow!25}9 & 6 & 5 & 8 & 7 & \cellcolor{blue!25}1 & 3 & \cellcolor{yellow!25}10 & \cellcolor{blue!25}2 \\
\hline 
 Shiny & \cellcolor{blue!25}2 & \cellcolor{yellow!25}9 & 4 & 3 & 6 & 5 & 8 & \cellcolor{blue!25}1 & \cellcolor{yellow!25}10 & 7 \\
\hline 
 High at the heel & 4 & 6 & 5 & \cellcolor{blue!25}1 & \cellcolor{yellow!25}9 & 8 & 3 & \cellcolor{blue!25}2 & \cellcolor{yellow!25}10 & 7\\
\hline 
 Long on the leg & 7 & \cellcolor{yellow!25}9 & \cellcolor{blue!25}2 & 3 & 6 & 5 & \cellcolor{yellow!25}10 & 8 & 4 & \cellcolor{blue!25}1 \\
\hline 
 Formal & 3 & 6 & 4 & 7& \cellcolor{yellow!25}9 & 8 & \cellcolor{blue!25}1 & \cellcolor{blue!25}2 & 5 & \cellcolor{yellow!25}10 \\
\hline 
 Sporty & \cellcolor{yellow!25}10 & 5 & 6 & 7 & 4 & 3 & 8 & \cellcolor{yellow!25}9 & \cellcolor{blue!25}1 & \cellcolor{blue!25}2 \\
\hline 
 Feminine & \cellcolor{blue!25}1 & 6 & 4 & 5 & \cellcolor{yellow!25}10 & \cellcolor{yellow!25}9 & 3 & \cellcolor{blue!25}2 & 8 & 7\\
\hline 
 \end{tabular} }
\caption{Ordering of classes with respect to attributes in the \textit{Shoes} dataset~\cite{whittlesearch}. Cells, coloured in \colorbox{blue!25}{blue}, represent classes that were used as negative examples and the ones coloured in \colorbox{yellow!25}{yellow} represent the ones used as positive examples for the corresponding attribute.
}
\label{tab:shoes_attr}
\end{table*}

\newpage
{\small
\bibliographystyle{ieee}
\bibliography{bibfile}
}

\end{document}